\newtheorem*{propositionOCO}{Proposition (OGD regret)}
\newtheorem{thm}{Theorem}
\newtheorem{cor}[thm]{Corollary}
\newtheorem{lem}[thm]{Lemma}
\newcommand{\bc}{\begin{center}}
\newcommand{\ec}{\end{center}}
\newcommand{\bq}{\begin{quote}}
\newcommand{\eq}{\end{quote}}
\newcommand{\be}{\begin{equation}}
\newcommand{\ee}{\end{equation}}
\newcommand{\beqa}{\begin{eqnarray*}}
\newcommand{\eeqa}{\end{eqnarray*}}
\newcommand{\beqn}{\begin{eqnarray}}
\newcommand{\eeqn}{\end{eqnarray}}
\newcommand{\bbibl}{}
\newcommand{\ba}{\begin{array}}
\newcommand{\ea}{\end{array}}
\newcommand\ind[1]{\mathbf{1}\left[#1\right]}
\newtheorem*{defnNorm}{Definition (p $\rightarrow$ q norm)}
\DeclareMathOperator*{\argmax}{argmax}
\DeclareMathOperator*{\argsort}{argsort}
\DeclarePairedDelimiter\floor{\lfloor}{\rfloor}
\begin{document}

\title{Perceptron-like Algorithms for Online Learning to Rank}

\author{\name Sougata Chaudhuri \email sougata@umich.edu\\
	\addr Department of Statistics,\\
	University of Michigan, \\
	1085 S. University Ave., Ann Arbor, MI 48109, USA\\
       \name Ambuj Tewari \email tewaria@umich.edu \\
       \addr Department of Statistics, and\\
       Department of Electrical Engineering and Computer Science,\\
       University of Michigan, \\
       1085 S. University Ave., Ann Arbor, MI 48109, USA}


\maketitle

\begin{abstract}
Perceptron is a classic online algorithm for learning a classification function. In this paper, we provide a novel extension of the perceptron algorithm to the learning to rank problem in information retrieval. We consider popular listwise performance measures such as Normalized Discounted Cumulative Gain (NDCG) and Average Precision (AP). We propose a novel family of listwise, large margin ranking surrogates, which are adaptable to NDCG and AP measures and derive a perceptron-like algorithm using these surrogates. Exploiting a self-bounding property of the proposed surrogates, we provide a guarantee on the cumulative NDCG (or AP) induced loss incurred by our perceptron-like algorithm. We show that, if there exists a perfect oracle ranker which can correctly rank, with some margin, each instance in an online sequence, the cumulative NDCG (or AP) induced loss of perceptron algorithm on that sequence is bounded by a constant, irrespective of the length of the sequence. This result is a learning to rank analogue of Novikoff's convergence theorem for the classification perceptron. However, our perceptron like algorithm for learning to rank has two drawbacks. First, unlike classification perceptron, the prediction at each round depends on a learning rate parameter. Second, the perceptron loss bound does not match our established lower bound on the cumulative loss achievable by any deterministic online algorithm. We propose a second perceptron like algorithm which achieves the lower bound and is independent of the learning rate parameter. However, our second algorithm does not adapt to different ranking measures, does not possess the listwise property and does not perform well on real world datasets. Experiments on simulated datasets corroborate our theoretical results and demonstrate competitive performance on large industrial benchmark datasets.
\end{abstract}

\section{Introduction}
\label{introduction}

Learning to rank \citep{liu2011learning} is a supervised learning problem where the output space consists of \emph{rankings} of a set of objects. In the learning to rank problem that frequently arises in information retrieval, the objective is to rank \emph{documents} associated with a \emph{query}, in the order of the \emph{relevance} of the documents for the given query. The accuracy of a ranked list, given actual relevance scores of the documents, is measured by various ranking performance measures, such as \emph{Normalized Discounted Cumulative Gain} (NDCG) \citep{jarvelin2002} and \emph{Average Precision} (AP) \citep{baeza1999}.  Since optimization of ranking measures during the training phase is computationally intractable, ranking methods are often based on minimizing \emph{surrogate} losses that are easy to optimize. 


The historical importance of the perceptron algorithm in the classification literature is immense \citep{rosenblatt1958,freund1999}. 
Classically the perceptron algorithm was not linked to surrogate minimization but the modern perspective on perceptron is to interpret it as online gradient descent (OGD), during mistake rounds, on the hinge loss function \citep{shalev2011}. The hinge loss has special properties that allow one to establish bounds on the cumulative zero-one loss (viz., the total number of mistakes) in classification, without making any statistical assumptions on the data generating mechanism. Novikoff's celebrated result \citep{novikoff1962} about the perceptron says that, if there is a perfect linear classification function which can correctly classify, with some margin, every instance in an online sequence, then the total number of mistakes made by perceptron, on that sequence, is bounded. Moreover, unlike the standard OGD algorithm, the performance of perceptron is independent of learning rate parameter, which is of significant advantage due to not having to learn the optimal parameter value.


Our work provides a novel extension of the perceptron algorithm to the learning to rank setting with a focus on two listwise ranking measures, NDCG and AP. Listwise measures are so named because the quality of ranking function is judged on an entire list of document, associated with a query, usually with an emphasis to avoid errors near top of the ranked list. 
Specifically, we make the following contributions in this work.
\begin{itemize}
\item We develop a family of \emph{listwise} large margin ranking surrogates. The family consists of Lipschitz functions and is parameterized by a set of weight vectors that makes the surrogates adaptable to losses induced by performance measures NDCG and AP.  The family of surrogates is an extension of the hinge surrogate in classification that upper bounds the $0$-$1$ loss. The family of surrogates has a special self-bounding property: the norm of the gradient of a surrogate can be bounded by the surrogate loss itself.
\item We exploit the self bounding property of the surrogates to develop an online perceptron-like algorithm for learning to rank (Algorithm~\ref{alg:LA}). We provide bounds on the cumulative NDCG and AP induced losses (Theorem~\ref{theoryboundinperceptron}). We prove that, if there is a \emph{perfect} linear ranking function which can rank correctly, with some margin, every instance in an online sequence, our perceptron-like algorithm perfectly ranks all but a finite number of instances (Corollary~\ref{cor:margin}). This implies that the cumulative loss induced by NDCG or AP is bounded by a constant, and our result can be seen as an extension of the classification perceptron mistake bound (Theorem~\ref{theoryboundinclassificationperceptron}). The performance of our perceptron algorithm, however, is dependent on a learning rate parameter, which is a disadvantage over classification perceptron. Moreover, the bound depends linearly on the number of documents per query. In practice, during evaluation, NDCG is often cut off at a point which is much smaller than number of documents per query. In that scenario, we prove that the cumulative NDCG loss of our perceptron is upper bounded by a constant which is dependent only on the cut-off point. (Theorem~\ref{thm:k-dependence}).
\item We prove a lower bound, on the cumulative loss induced by NDCG or AP, that can be achieved by any deterministic online algorithm (Theorem~\ref{lower-bound}) under a separability assumption. The lower bound is \emph{independent} of the number of documents per query. We propose a second perceptron like algorithm (Algorithm~\ref{alg:OA}) which achieves the lower bound (Theorem~\ref{theoryboundinonlinealg}), with performance being independent of learning rate parameter. However, the surrogate on which the perceptron type algorithm operates is not listwise in nature and does not adapt to different performance measures. Thus, its empirical performance on real data is significantly worse than the first perceptron algorithm (Algorithm~\ref{alg:LA}).
\item We provide empirical results on simulated as well as large scale benchmark datasets and compare the performance of our perceptron algorithm with the online version of the widely used ListNet learning to rank algorithm \citep{Cao2007}. 
\end{itemize}

The rest of the paper is organized as follows. Section~\ref{probdef} provides formal definitions and notations related to the problem setting. Section~\ref{classification-perceptron} provides a review of perceptron for classification, including algorithm and theoretical analysis. Section~\ref{SLAM}  introduces the family of listwise large margin ranking surrogates, and contrasts our surrogates with a number of existing large margin ranking surrogates in literature. Section~\ref{perceptron} introduces the perceptron algorithm for learning to rank, and discusses various aspects of the algorithm and the associated theoretical guarantee. Section~\ref{minimax-bound} establishes a lower bound on NDCG/AP induced cumulative loss and introduces the second perceptron like algorithm. Section~\ref{related-work} compares our work with existing perceptron algorithms for ranking. Section~\ref{experiments} provides empirical results on simulated and large scale benchmark datasets.

\section{Problem Definition}
\label{probdef}
In learning to rank, we formally denote the input space as $\mathcal{X} \subseteq \mathbb{R}^{m \times d}$. Each input consists of $m$ rows of document-query features represented as $d$ dimensional vectors. Each input corresponds to a single query
and, therefore, the $m$ rows have features extracted from the same query but $m$ different documents. In practice $m$ changes from one input instance to another but we treat $m$ as a constant for ease of presentation. For $X \in \mathcal{X}$, $X= (x_1,\ldots,x_m)^{\top}$, where $x_i \in \mathbb{R}^d$ is the feature extracted from a query and the $i$th document associated with that query. The supervision space is $\mathcal{Y} \subseteq \{0,1,\ldots,n\}^m$, representing relevance score vectors. If $n=1$, the relevance vector is binary graded. For $n>1$, relevance vector is multi-graded. Thus, for $R \in \mathcal{Y}$, $R=(R_1,\ldots,R_m)^{\top}$, where $R_i$ denotes relevance of $i$th document to a given query. Hence, $R$ represents a vector and $R_i$, a scalar, denotes $i$th component of vector. Also, relevance vector generated at time $t$ is denoted $R_t$ with $i$th component denoted $R_{t,i}$.


The objective is to learn a ranking function which ranks the documents associated with a query in such a way that more relevant documents are placed ahead of less relevant ones.  The prevalent technique is to learn a scoring function and obtain a ranking by sorting the score vector in descending order. For $X \in \mathcal{X}$, a linear scoring function is $f_{w}(X)= X\cdot w= s^w \in \mathbb{R}^m$, where $w \in \mathbb{R}^d$. The quality of the learnt ranking function is evaluated on a test query using various performance measures. We use two of the most popular performance measures in our paper, viz. NDCG and AP. 

NDCG, cut off at $k \le m$ for a query with $m$ documents, with relevance vector $R$ and score vector $s$ induced by a ranking function, is defined as follows:
\begin{equation}
\label{eq:NDCGk}
\text{NDCG}_k(s,R) = \frac{1}{Z_k(R)}\sum_{i=1}^k G(R_{\pi_s(i)})D(i).
\end{equation}
Shorthand representation of $\text{NDCG}_k(s,R)$ is $\text{NDCG}_k$. Here, $G(r)= 2^r -1$, $D(i)= \frac{1}{\log_2{(i+1)}}$, $Z_k(R)= \underset {\pi \in S_m}{\max}\sum_{i=1}^k G(R_{\pi(i)})D(i)$. Further, $S_m$ represents the set of permutations over $m$ objects. $\pi_s = \argsort(s)$ is the permutation induced by sorting score vector $s$ in descending order (we use $\pi_s$ and $\argsort(s)$ interchangeably). A permutation $\pi$ gives a mapping from ranks to documents and $\pi^{-1}$ gives a mapping from documents to ranks. Thus, $\pi(i)=j$ means document $j$ is placed at position $i$ while $\pi^{-1}(i)=j$ means document $i$ is placed at position $j$. For $k=m$, we denote $\text{NDCG}_m(s,R)$ as $\text{NDCG}(s,R)$. 
The popular performance measure, Average Precision (AP), is defined only for binary relevance vector, i.e., each component can only take values in $\{0,1\}$:
\begin{equation}
\label{eq:AP}
\small
\text{AP}(s,R) = \frac{1}{r} \sum_{j:R_{\pi_s(j)}=1} \frac{\sum_{i \le j} \mathbbm{1}[R_{\pi_s(i)}=1]}{j}
\end{equation}
where $r= \|R\|_1$ is the total number of relevant documents.

All ranking performances measures are actually \emph{gains}. When we say ``NDCG induced loss", we mean a loss function that simply subtracts NDCG from its maximum possible value, which is $1$ (same for AP).

\section{Perceptron for Classification}
\label{classification-perceptron}
We will first briefly review the perceptron algorithm for classification, highlighting the modern viewpoint that it executes online gradient descent (OGD) \citep{zinkevich2003online} on hinge loss during mistake rounds and achieves a bound on total number of mistakes. This will allow us to directly compare and contrast our extension of perceptron to the learning to rank setting. For more details, we refer the reader to the survey written by \citet[Section 3.3]{shalev2011}.

In classification, an instance is of the form $x \in \mathbb{R}^d$ and corresponding supervision (label) is $y \in \{-1,1\}$. A linear classifier is a scoring function $g_w(\cdot)$, parameterized by $w \in \mathbb{R}^d$, producing score $g_w(x)= x \cdot w= s \in \mathbb{R}$. Classification of $x$ is obtained by using ``sign" predictor on $s$, i.e., $\text{sign}(s)$ $\in \{-1,1\}$. The loss is of the form: $\ell(w,(x,y))= \mathbbm{1}[\text{sign}(x \cdot w) \neq y]$. The hinge loss is defined as: $\phi(w, (x,y))= [1- y(x \cdot w)]_{+}$, where $[a]_+= \max \{0,a\}$. 

The perceptron algorithm operates on the loss $f_t(w)$, defined on a sequence of data $\{x_t,y_t\}_{t \ge 1}$, produced by an \emph{adaptive adversary} as follows:
\begin{equation}
\label{eq:perceptronfunction}
f_t(w)=\\
\left\{
	\begin{array}{ll}
		 [1- y_t(x_t \cdot w)]_+ & \mbox{if } \ell(w_t,(x_t,y_t))=1\\
		 0 & \mbox{if } \ell(w_t, (x_t,y_t))=0\\ 
           \end{array}
\right.
\end{equation}
where $w_t$ is the learner's move in round $t$. It is important to understand the concept of the loss $f_t(\cdot)$ and adaptive adversary here. An adaptive adversary is allowed to choose $f_t$ at round $t$ based on the moves of the perceptron algorithm (Algorithm~\ref{alg:PC}) upto that round. Once the learner fixes its choice $w_t$ at the end of step $t-1$, the adversary decides which function to play. It is either $[1- y_t(x_t \cdot w)]_{+}$ or 0, depending on whether  $\ell(w_t,(x_t,y_t))$ is 1 or 0 respectively. Notice that $f_t(w)$ is convex in both cases.


The perceptron updates a classifier $g_{w_t}(\cdot)$ (effectively updates $w_t$), in an online fashion. The update occurs by application of OGD on the sequence of functions $f_t(w)$ in the following way: perceptron initializes $w_1=\vec{0}$ and uses update rule $w_{t+1}= w_t - \eta z_t$, where $z_t \in \partial{f_t}(w_t)$ ($z_t$ is a subgradient) and $\eta$ is the learning rate (the importance of $\eta$ will be discussed at the end of the section). If $\ell(w_t, (x_t,y_t))=0$, then $f_t(w_t)=0$; hence $z_t= \vec{0}$. Otherwise, $z_t= -y_t x_t \in  \partial{f_t}(w_t)$.  Thus,
\begin{equation}
\label{eq:perceptronclassification}
w_{t+1} = \\
\left\{
	\begin{array}{ll}
		w_t & \mbox{if } \ell(w_t,(x_t,y_t))=0\\
		w_t + \eta y_t x_t  & \mbox{if } \ell(w_t, (x_t,y_t))=1.\\ 
           \end{array}
\right.
\end{equation}

The perceptron algorithm for classification is described below:

\floatstyle{ruled}
\newfloat{algorithm}{htbp}{loa}
\floatname{algorithm}{Algorithm}
\begin{algorithm}
\caption{Perceptron Algorithm for Classification}
\label{alg:PC}
\begin{tabbing}
Learning rate $\eta>0$, $w_1=\mathbf{0} \in \mathbb{R}^d$.\\
For \=$t=1$ to $T$ \\
\> Receive $x_t$.\\
\> Predict $p_t= \text{sign}(x_t \cdot w_t)$.\\
\> Receive $y_t$ \\
\> {\bf If} $\ell(w_t,(x_t,y_t)) \neq 0$ \= \\
\> \ \ $w_{t+1} =w_t + \eta y_t x_t$ \\
\> {\bf else}\\
\> \ \ $w_{t+1} =w_t $\\
{\bf End For}
\end{tabbing}
\end{algorithm}

\begin{thm}
\label{theoryboundinclassificationperceptron}
Suppose that the perceptron for classification algorithm runs on an online sequence of data $\{(x_1,y_1), \ldots, (x_T,y_T)\}$ and let $R_x= \max_t \|x_t\|_2$. Let $f_t(\cdot)$ be defined as in Eq.~\ref{eq:perceptronfunction}. For all $u \in \mathbb{R}^d$ and setting $\eta= \frac{\|u\|_2}{R_x \sum_{t=1}^T \ell(w_t, (x_t,y_t))}$, the perceptron mistake bound is:
\begin{equation}
\label{eq:classification-perceptronbound}
\sum_{t=1}^T \ell(w_t, (x_t,y_t)) \le \sum_{t=1}^T f_t(u) + R_x \|u\|_2 \sqrt{\sum_{t=1}^T f_t(u)} + R_x^2 \|u\|_2^2
\end{equation}
In the special case where there exists  $u$ s.t. $f_t(u)=0$, $\forall \ t$, we have
\begin{equation}
\label{eq:classification-perceptronconstant}
\forall \ T, \ \sum_{t=1}^T \ell(w_t, (x_t,y_t)) \le  R_x^2 \|u\|_2^2
\end{equation}
\end{thm}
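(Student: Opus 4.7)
The plan is to exploit the observation already emphasized in the text: the perceptron update in Eq.~\ref{eq:perceptronclassification} is exactly online gradient descent (OGD) run on the sequence of convex functions $f_t$ of Eq.~\ref{eq:perceptronfunction}, with $z_t \in \partial f_t(w_t)$ given by $z_t = -y_t x_t$ on mistake rounds and $z_t = \mathbf{0}$ otherwise. So the starting point will be the standard OGD regret inequality (the Proposition (OGD regret) referenced in the paper): for every comparator $u \in \mathbb{R}^d$ and any $\eta > 0$,
\[
\sum_{t=1}^T f_t(w_t) - \sum_{t=1}^T f_t(u) \;\le\; \frac{\|u\|_2^2}{2\eta} + \frac{\eta}{2}\sum_{t=1}^T \|z_t\|_2^2.
\]

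Next I would record two elementary but crucial properties of $f_t$. First, a self-bounding estimate on the gradient: since $z_t$ is either $-y_tx_t$ (on mistake rounds) or zero, and since $\|x_t\|_2 \le R_x$, we have $\|z_t\|_2^2 \le R_x^2\,\ell(w_t,(x_t,y_t))$. Second, a lower bound on $f_t(w_t)$: when $\ell(w_t,(x_t,y_t))=1$, we have $y_t(x_t\cdot w_t)\le 0$ and so $f_t(w_t) = 1 - y_t(x_t\cdot w_t) \ge 1 = \ell(w_t,(x_t,y_t))$, while when $\ell(w_t,(x_t,y_t))=0$ we simply have $f_t(w_t)=0 \ge \ell(w_t,(x_t,y_t))\cdot 0$. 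Writing $M := \sum_{t=1}^T \ell(w_t,(x_t,y_t))$ and $A := \sum_{t=1}^T f_t(u)$, these two facts substituted into the OGD bound give
\[
M \;\le\; A + \frac{\|u\|_2^2}{2\eta} + \frac{\eta R_x^2 M}{2}.
\]
Plugging in the prescribed step size $\eta = \|u\|_2/(R_x M)$ (the choice that balances the last two terms) yields, after one line of algebra, the key estimate $M - A \le R_x\|u\|_2\sqrt{M}$.

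The main obstacle is that this inequality is implicit in $M$ --- the quantity being bounded appears both linearly and inside a square root. The plan to resolve it is the standard two-step trick. First treat it as a quadratic in $y=\sqrt{M}$, namely $y^2 - R_x\|u\|_2\,y - A \le 0$, and bound $\sqrt{R_x^2\|u\|_2^2+4A}\le R_x\|u\|_2 + 2\sqrt{A}$ to conclude $\sqrt{M} \le R_x\|u\|_2 + \sqrt{A}$. Second, feed this back into the linear side of $M \le A + R_x\|u\|_2\sqrt{M}$ to obtain
\[
M \;\le\; A + R_x\|u\|_2\bigl(R_x\|u\|_2 + \sqrt{A}\bigr) \;=\; A + R_x\|u\|_2\sqrt{A} + R_x^2\|u\|_2^2,
\]
which is exactly Eq.~\ref{eq:classification-perceptronbound}. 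The separable case follows at once: if $f_t(u)=0$ for all $t$, then $A=0$ and the implicit inequality collapses to $\sqrt{M}\le R_x\|u\|_2$, so $M \le R_x^2\|u\|_2^2$ uniformly in $T$, recovering Novikoff's mistake bound as stated in Eq.~\ref{eq:classification-perceptronconstant}.
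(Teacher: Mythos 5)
Your overall route is exactly the paper's: run the OGD regret bound on the adaptively chosen convex losses $f_t$, use the self-bounding estimate $\|z_t\|_2^2 \le R_x^2\,\ell(w_t,(x_t,y_t))$ together with $f_t(w_t)\ge \ell(w_t,(x_t,y_t))$, tune $\eta$, and resolve the resulting implicit inequality; your two-step quadratic argument is equivalent to the paper's algebraic trick $x - b\sqrt{x} - c \le 0 \implies x \le b^2 + c + b\sqrt{c}$, and the separable case is handled identically.

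There is, however, one step that does not work as written. You claim that plugging $\eta = \|u\|_2/(R_x M)$ into $M \le A + \frac{\|u\|_2^2}{2\eta} + \frac{\eta R_x^2 M}{2}$ ``balances the last two terms'' and yields $M - A \le R_x\|u\|_2\sqrt{M}$. It does not: with that $\eta$ the two terms become $\frac{R_x\|u\|_2 M}{2}$ and $\frac{R_x\|u\|_2}{2}$, so you only get $M - A \le \frac{R_x\|u\|_2}{2}(M+1)$, which is vacuous as a bound on $M$ whenever $R_x\|u\|_2 \ge 2$. The choice that actually balances the terms and produces the key estimate is $\eta = \|u\|_2/(R_x\sqrt{M})$, i.e.\ with $\sqrt{\sum_t \ell(w_t,(x_t,y_t))}$ rather than the sum itself in the denominator (the $\eta$ displayed in the theorem statement appears to be missing this square root, and you seem to have inherited that slip). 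The fix is harmless: since the perceptron's predictions, and hence $M$, are independent of $\eta>0$, you are free to analyze the algorithm with the balanced step size, after which your derivation of $M - A \le R_x\|u\|_2\sqrt{M}$ and everything downstream, including Eq.~\ref{eq:classification-perceptronbound} and the separable-case bound $M \le R_x^2\|u\|_2^2$, is correct.
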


As can be clearly seen from Eq.~\ref{eq:classification-perceptronbound}, the cumulative loss bound (i.e., total number of mistakes over $T$ rounds) is upper bounded in terms of the cumulative sum of the functions $f_t(\cdot)$. In the special case where there exists a perfect linear classifier with margin, Eq.~\ref{eq:classification-perceptronconstant} shows that the total number of mistakes is bounded, regardless of the number of instances. 

One drawback of the bound in Eq.~\ref{eq:classification-perceptronconstant} is that the concept of margin is not explicit, i.e., it is hidden in the norm of the parameter of the perfect classifier ($\|u\|_2$). Let us assume that there is a linear classifier parameterized by a unit norm vector $u_\star$, such that all instances $x_t$ are not only correctly classified, but correctly classified with a \emph{margin} $\gamma$, defined as:
\begin{equation}
\label{eq:classification-margin}
y_t (x_t \cdot u_\star) \ge \gamma, \ \forall \ T
\end{equation}
It is easy to see that the scaled vector $u = u_\star/\gamma$, whose norm is $1/\gamma^2$, will satisfy $f_t(u)=0$ for all $t$. Therefore, we have following corollary.
\begin{cor}
If the margin condition~\eqref{eq:classification-margin} holds, then total number of mistakes is upper bounded by $\dfrac{R_x^2}{\gamma^2} $, a bound independent of the number of instances in the online sequence.
\end{cor}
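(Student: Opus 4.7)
The plan is to invoke Theorem~\ref{theoryboundinclassificationperceptron} (specifically the special-case bound in Eq.~\ref{eq:classification-perceptronconstant}) with a carefully chosen comparator $u$ derived from the margin-achieving unit vector $u_\star$. The corollary is essentially a direct rescaling argument, so no new machinery is needed.

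First, I would set $u \coloneqq u_\star/\gamma$, so that $\|u\|_2^2 = 1/\gamma^2$ since $u_\star$ is unit norm. Next, I would verify that this $u$ satisfies $f_t(u) = 0$ for every round $t$. There are two cases in the definition of $f_t$ in Eq.~\ref{eq:perceptronfunction}. If $\ell(w_t,(x_t,y_t)) = 0$, then $f_t(u) = 0$ by definition. Otherwise, using the margin assumption in Eq.~\ref{eq:classification-margin},
\begin{equation*}
y_t(x_t \cdot u) \;=\; \frac{y_t(x_t \cdot u_\star)}{\gamma} \;\ge\; \frac{\gamma}{\gamma} \;=\; 1,
\end{equation*}
so $1 - y_t(x_t \cdot u) \le 0$ and hence $f_t(u) = [1 - y_t(x_t \cdot u)]_+ = 0$.

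Having established that $f_t(u) = 0$ for all $t$, I can apply the special case in Eq.~\ref{eq:classification-perceptronconstant} of Theorem~\ref{theoryboundinclassificationperceptron} directly to conclude
\begin{equation*}
\sum_{t=1}^T \ell(w_t,(x_t,y_t)) \;\le\; R_x^2\,\|u\|_2^2 \;=\; \frac{R_x^2}{\gamma^2},
\end{equation*}
which is the desired bound, and it is independent of $T$.

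There is no real obstacle here beyond the rescaling bookkeeping: the only minor subtlety is to recognize that the bound $R_x^2\|u\|_2^2$ in Eq.~\ref{eq:classification-perceptronconstant} holds uniformly in $T$ and does not require knowing the value $\sum_t \ell(w_t,(x_t,y_t))$ in advance (the $\eta$ dependence disappears in this vacuous-hinge regime). Thus the corollary follows as a short rescaling argument on top of the theorem.
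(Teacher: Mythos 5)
Your proof is correct and takes essentially the same route as the paper: rescale the unit-norm margin vector to $u = u_\star/\gamma$ (so $\|u\|_2^2 = 1/\gamma^2$), verify via the margin condition that $f_t(u)=0$ for every $t$, and apply the special case Eq.~\eqref{eq:classification-perceptronconstant} of Theorem~\ref{theoryboundinclassificationperceptron}. Your case analysis on the definition of $f_t$ just makes explicit what the paper states as ``easy to see.''
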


{\bf Importance of learning rate parameter $\eta$}: The prediction at round $t$ is $p_t= \text{sign}(x_t \cdot w_t)$. Let $\mathcal{M}_t$ indicate the rounds, up to time point  $t-1$, where perceptron made a mistake. Starting from $w_1= \vec{0}$, unraveling $w_t$, we get $p_t = \text{sign}(\sum_{i \in \mathcal{M}_t} \eta\  x_t \cdot (y_i x_i) )$. It can be easily seen that $p_t$ is invariant to value of $\eta$, for $\eta>0$. Hence, the actual performance of the perceptron algorithm (in terms of total number of mistakes) is independent of learning rate $\eta$ and thus, $\eta=1$ can be fixed from the beginning of the algorithm. The reason for including $\eta$ in the algorithm is that in the subsequent analysis (Theorem~\ref{theoryboundinclassificationperceptron}), the perceptron loss bound uses standard regret analysis of OGD, where the optimal regret bound is established by optimizing over learning rate $\eta$. So, though the performance is actually independent of $\eta$, optimization over $\eta$ is necessary to establish the \emph {optimal theoretical upper bound on the loss}.

\section {A Novel Family of Listwise Surrogates}
\label{SLAM}
We define the novel SLAM family of loss functions: these are Surrogate, Large margin, Listwise and Lipschitz losses, Adaptable to multiple performance measures, and can handle Multiple graded relevance.
For score vector $s \in \mathbb{R}^m$, and relevance vector $R \in \mathcal{Y}$, the family of convex loss functions is defined as:
\begin{equation}
\label{eq:lossdef}
\begin{aligned}
\phi^v_{SLAM}(s,R) &= \min_{\delta \in \mathbb{R}^m} \sum_{i=1}^m v_i \delta_i \\
 \text{s.t.} \ \ \ \  \delta_i \ge 0, & \ \forall \ i, 
\ \ \ \ s_i +\delta_i \ge \Delta + s_j, \ \text{if} \  R_i > R_j, \ \ \forall \ i,j.
\end{aligned}
\end{equation}
The constant $\Delta$ denotes margin and $v = (v_1,\ldots,v_m)$ is an element-wise non-negative weight vector. Different vectors $v$, to be defined later, yield different members of the SLAM family. Though $\Delta$ can be varied for empirical purposes, we fix $\Delta=1$ for our analysis. The intuition behind the loss setting is that scores associated with more relevant documents should be higher, with a margin, than scores associated with less relevant documents. The weights decide how much weight to put on the errors.

The following reformulation of $\phi^v_{SLAM}(s,R)$ will be useful in later derivations. 
{\small
\begin{equation}
\label{eq:theoreticalloss}
\begin{split}
\sum_{i=1}^{m} v_i \  \max(0,\underset{j=1,\ldots,m}{ \max}\{\mathbbm{1}(R_i>R_j)(1+ s_j -s_i)\}) \ .
\end{split}
\end{equation}}

\begin{lem}
\label{convexity}
For any relevance vector $R$, the function $\phi^v_{SLAM}(\cdot,R)$ is \emph{convex}. 
\end{lem}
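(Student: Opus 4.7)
The plan is to prove convexity by first verifying the reformulation given in Eq.~\ref{eq:theoreticalloss}, and then observing that the reformulation exhibits $\phi^v_{SLAM}(\cdot,R)$ as a nonnegative linear combination of pointwise maxima of affine functions of $s$, which is a textbook convex construction.

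First I would check that, for fixed $s$, the inner linear program decouples across the coordinates of $\delta$. The only coupling in the constraints
\[
\delta_i \ge 0, \qquad \delta_i \ge 1 + s_j - s_i \ \text{whenever } R_i > R_j,
\]
is through the shared index $i$; each $\delta_i$ is constrained independently of the other $\delta_k$'s. Since $v_i \ge 0$, the objective $\sum_i v_i \delta_i$ is minimized by taking each $\delta_i$ as small as the constraints permit, namely
\[
\delta_i^\star(s) = \max\Bigl(0, \max_{j:\,R_i>R_j}\{1 + s_j - s_i\}\Bigr),
\]
with the convention that a max over an empty set is $-\infty$ (so $\delta_i^\star(s)=0$ when no $j$ has $R_i>R_j$). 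Plugging $\delta^\star(s)$ back into the objective recovers exactly Eq.~\ref{eq:theoreticalloss}.

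Given the reformulation, convexity is immediate. For each fixed pair $(i,j)$, the map $s \mapsto \mathbbm{1}(R_i>R_j)(1+s_j-s_i)$ is affine in $s$ (either identically $0$, or equal to $1+s_j-s_i$). The pointwise maximum of finitely many affine functions is convex, so $s\mapsto \max(0,\max_{j}\{\mathbbm{1}(R_i>R_j)(1+s_j-s_i)\})$ is convex. Multiplying by the nonnegative scalar $v_i$ preserves convexity, and finally the sum over $i$ of convex functions is convex.

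The only potentially subtle step is justifying the reformulation, specifically handling the case where a given index $i$ has no $j$ with $R_i > R_j$; there the set of constraints on $\delta_i$ reduces to $\delta_i \ge 0$, giving $\delta_i^\star = 0$, which matches the $\max(0,\cdot)$ expression under the empty-max convention. Beyond that, the argument is a routine application of the closure of convex functions under pointwise maxima and nonnegative linear combinations, and no assumption on $v$ beyond $v_i \ge 0$ is needed.
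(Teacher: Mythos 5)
Your proof is correct and follows the same route as the paper, which simply declares convexity obvious from the representation in Eq.~\eqref{eq:theoreticalloss}: a nonnegative combination of pointwise maxima of affine functions. The extra step you supply—verifying that the constrained minimization over $\delta$ decouples coordinate-wise and yields that representation—is a useful detail the paper leaves implicit, but it is the same argument, not a different approach.
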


\begin{proof} 
Claim is obvious from the representation given in Eq.~\ref{eq:theoreticalloss}.
%
%
%
\end{proof}

\subsection{Weight Vectors Parameterizing the SLAM Family}
\label{upperbounds}
As we stated after Eq.~\ref{eq:lossdef}, different weight vectors lead to different members of the SLAM family. The weight vectors play a crucial role in the subsequent theoretical analysis. We will provide two weight vectors, $v^{\text{AP}}$ and $v^{\text{NDCG}}$, that result in upper bounds for AP and NDCG induced losses respectively. Later, we will discuss the necessity of choosing such weight vectors.

Since the losses in SLAM family is calculated with the knowledge of the relevance vector $R$, for ease of subsequent derivations, we can assume, without loss of generality, that documents are sorted according to their relevance levels. Thus, we assume that $R_1 \ge R_2 \ge \ldots \ge R_m$, where $R_i$ is the relevance of document $i$. Note that both $v^{\text{AP}}$ and $v^{\text{NDCG}}$ depend on the relevance vector $R$ but we hide that dependence in the notation to reduce clutter.


{\bf Weight vector for AP loss}: Let $R \in \mathbb{R}^m$ be a binary relevance vector. Let $r$ be the number of relevant documents (thus, $R_1=R_2=\ldots=R_r=1$ and $R_{r+1}=\ldots=R_m=0$). We define vector $v^{\text{AP}} \in \mathbb{R}^m$ as 
\begin{equation}
\label{eq:APweights}
v^{\text{AP}}_i = \\
\left\{
	\begin{array}{ll}
		\frac{1}{r}  & \mbox{if } i=1,2,\ldots,r\\
		0  & \mbox{if } i=r+1,\ldots,m .\\ 
           \end{array}
\right.
\end{equation}

{\bf Weight vector for NDCG loss}: For a given relevance vector $R \in \mathbb{R}^m$, we define vector $v^{\text{NDCG}} \in \mathbb{R}^m$ as 
\begin{equation}
\label{eq:ndcgweights}
\begin{split}
v^{\text{NDCG}}_i =   \frac{G(R_i)D(i)}{Z(R)}, \ i=1,\ldots,m .
\end{split}
\end{equation}
{\bf Note}: Both weights ensure that $v_1 \ge v_2 \ge \ldots \ge v_m$ (since $R_1 \ge R_2 \ge \ldots \ge R_m$).
Using the weight vectors, we have the following upper bounds.
\begin{thm}
\label{eq:upperbound1}
Let $v^{\text{AP}} \in \mathbb{R}^m$ and $v^{\text{NDCG}} \in \mathbb{R}^m$ be the weight vectors as defined in Eq.~\eqref{eq:APweights} and Eq.~\eqref{eq:ndcgweights} respectively. Let $\text{AP}(s,R)$  and $\text{NDCG}(s,R)$ be the AP value and NDCG value determined by  relevance vector $R \in \mathbb{R}^m$ and score vector $s \in \mathbb{R}^m$. Then, the following inequalities hold, $\forall \ s$, $\forall \ R$
\begin{equation}
\begin{aligned}
&\phi^{v^{\text{AP}}}_{\text{SLAM}}(s,R) \ge 1-\text{AP}(s,R)\\ 
&\phi^{v^{\text{NDCG}}}_{\text{SLAM}}(s,R) \ge 1-\text{NDCG}(s,R) \ .
\end{aligned}
\end{equation}
\end{thm}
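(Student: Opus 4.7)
The plan is to handle the AP and NDCG bounds in parallel, with a common strategy: identify the high-relevance documents that have a lower-relevance document ranked at or above them, and show that each such ``inverted'' document triggers a SLAM contribution of at least its weight $v_i$---which is sized exactly to pay for that document's share of the measure-induced loss.

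For AP I would, following the WLOG sorting of the excerpt, let indices $1,\ldots,r$ denote the relevant docs and let $\rho_1<\rho_2<\cdots<\rho_r$ be their ranks under $\pi_s$. Then $\text{AP}(s,R)=\frac{1}{r}\sum_{k=1}^{r} k/\rho_k$, so
\begin{equation*}
1-\text{AP}(s,R) \;=\; \frac{1}{r}\sum_{k=1}^{r} \frac{\rho_k-k}{\rho_k} \;\le\; \frac{1}{r}\,\#\{k : \rho_k > k\}.
\end{equation*}
The condition $\rho_k>k$ means that the $k$-th best-scoring relevant doc (call it $i$) has at least one non-relevant doc strictly above it in $\pi_s$; that witness $j>r$ satisfies $s_j\ge s_i$, so the inner max in $\phi^{v^{\text{AP}}}_{\text{SLAM}}$ at index $i$ is at least $1$, making its contribution at least $v^{\text{AP}}_i=1/r$. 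Summing over such $i$ yields $\phi^{v^{\text{AP}}}_{\text{SLAM}}(s,R) \ge \frac{1}{r}\,\#\{k:\rho_k>k\} \ge 1-\text{AP}(s,R)$.

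For NDCG I would let $\tau=\pi_s^{-1}$ and reindex to obtain
\begin{equation*}
Z(R)\bigl[1-\text{NDCG}(s,R)\bigr] \;=\; \sum_i G(R_i)\bigl[D(i)-D(\tau(i))\bigr] \;\le\; \sum_{i:\,\tau(i)>i} G(R_i)\,D(i),
\end{equation*}
using $x\le x^+$ and $D(\tau(i))\ge 0$. Paralleling the AP argument, it then suffices to show that whenever $\tau(i)>i$ there exists $j$ with $R_j<R_i$ strictly and $s_j\ge s_i$, for this forces the SLAM contribution of index $i$ to be at least $v^{\text{NDCG}}_i = G(R_i)D(i)/Z(R)$, and summing delivers the bound.

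This existence claim is the main obstacle, since it is actually false on the raw relevance-sorted ordering: e.g.\ for $R=(2,1,1,0)$ with $s=(3,1,2,0)$ one has $\tau(2)=3>2$ yet the only $j$ with $R_j<R_2$ has $s_j<s_2$. I would circumvent this by first observing that both $\phi^{v^{\text{NDCG}}}_{\text{SLAM}}(s,R)$ and $1-\text{NDCG}(s,R)$ are invariant under any relabeling that permutes documents within a single relevance tier, so one may WLOG further re-sort docs inside each tier by decreasing score. Under this refined labeling a counting argument closes the claim: among the $\tau(i)-1$ docs ranked above doc $i$, at most $|V_i|$ come from the strictly higher-relevance set $V_i=\{j:R_j>R_i\}$, and exactly $i-|V_i|-1$ come from the same tier as $i$ (the same-tier docs outscoring $i$, which by the within-tier sort are precisely those with index less than $i$), leaving at least $(\tau(i)-1)-|V_i|-(i-|V_i|-1)=\tau(i)-i\ge 1$ docs from strictly lower tiers; any such $j$ has $R_j<R_i$ and $\tau(j)<\tau(i)$, hence $s_j\ge s_i$, which is all that is needed.
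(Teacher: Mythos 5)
Your proposal is correct in substance, and it splits into a half that mirrors the paper and a half that is genuinely different. For NDCG you follow essentially the paper's route: drop the non-positive terms of $\frac{1}{Z(R)}\sum_i G(R_i)\bigl(D(i)-D(\pi_s^{-1}(i))\bigr)$, and for each document with $\pi_s^{-1}(i)>i$ exhibit a strictly less relevant document ranked above it, so that the $i$th SLAM term fires with value at least $1$ and weight $v^{\text{NDCG}}_i=G(R_i)D(i)/Z(R)$; the paper simply asserts the existence of this witness from the ``within-tier sorted by score'' convention, whereas your counting argument, $(\tau(i)-1)-|V_i|-(i-|V_i|-1)=\tau(i)-i\ge 1$, makes that step explicit, which is a genuine sharpening. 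For AP you take a different and cleaner path: the paper derives a family of scenario-based inequalities (e.g.\ $v_1+\cdots+v_r\ge$ the worst-case AP loss when every relevant document is outscored, and so on down to $v_r$) and then checks that $v^{\text{AP}}$ satisfies them, while you use the exact identity $\text{AP}=\frac{1}{r}\sum_k k/\rho_k$, bound $1-\text{AP}\le\frac{1}{r}\#\{k:\rho_k>k\}$, and charge each displaced relevant document to a SLAM term of size at least $1/r$; this per-document charging is tighter, avoids the paper's informal case enumeration, and needs no within-tier sorting at all since $v^{\text{AP}}$ is constant on the relevant tier. One caveat on your WLOG step: the claim that $\phi^{v^{\text{NDCG}}}_{\text{SLAM}}$ is invariant under within-tier relabeling is not literally true, because within a tier the weights $G(R_i)D(i)/Z(R)$ differ through $D(i)$ while the hinge values get permuted (for $R=(1,1,0)$, compare $s=(5,0,4.5)$ with $s=(0,5,4.5)$: the SLAM values differ). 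The reduction still goes through, for a slightly different reason: within a tier the hinge term is non-increasing in the score and the weights are non-increasing in the index, so re-sorting each tier by decreasing score can only decrease $\sum_i v_i c_i$ (a rearrangement argument), while $1-\text{NDCG}$ is unchanged; hence the bound proved for the score-sorted labeling implies it for every labeling. With that one-line repair (plus a consistent tie-breaking convention inside tiers, a point the paper also glosses over), your argument is complete.
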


The proof of the theorem is in Appendix~\ref{App:AppendixA}.

\subsection{Properties of SLAM Family and Upper Bounds}
We discuss some of the properties of SLAM family and related upper bounds.
\label{properties}
{\bf Listwise Nature of SLAM Family}: The critical property for a surrogate to be considered \emph{listwise} is that the loss must be calculated over the entire list of documents as a whole, with errors at the top penalized  more than errors at the bottom. Since perfect ranking places the most relevant documents at top, errors corresponding to most relevant documents should be penalized more in SLAM in order to be considered a listwise family. Both $v^{\text{NDCG}}$ and $v^{\text{AP}}$ have the property that the more relevant documents get more weight.\\
{\bf Upper Bounds on NDCG and AP}: By Theorem~\ref{eq:upperbound1}, the weight vectors make losses in SLAM family upper bounds on NDCG and AP induced losses. The SLAM loss family is analogous to the hinge loss in classification. Similar to hinge loss, the surrogate losses of SLAM family are $0$ when the predicted scores respect the relevance labels (with some margin).  The upper bound property will be crucial in deriving guarantees for a perceptron-like algorithm in learning to rank. Like hinge loss, the upper bounds can possibly be loose in some cases, but, as we show next, the upper bounding weights make SLAM family Lipschitz continuous with a small Lipschitz constant. This naturally restricts SLAM losses from growing too quickly. Empirically, we will show that the perceptron developed based on the SLAM family produce competitive performance on large scale industrial datasets. Along with the theory, the empirical performance supports the fact that upper bounds are quite meaningful.\\
{\bf Lipschitz Continuity of SLAM}: Lipschitz continuity of an arbitrary loss, $\ell(s,R)$ w.r.t. $s$ in ${\ell}_2$ norm, means that there is a constant $L_2$ such that $|\ell(s_1,R)- \ell(s_2,R)| \le L_2 \|s_1-s_2\|_2$, for all $s_1, s_2\in\mathbb{R}^m$. By duality, it follows that $L_2 \ge \underset{s}{\sup}\ \|\nabla_{s}\ell(s,R)\|_2$.  We calculate $L_2$ as follows:\\ 
Let $ b_{ij}= \{\mathbbm{1}(R_i>R_j)(1 +s_j -s_i)\}$. The sub-gradient of $\phi^v_{SLAM}$, w.r.t. to $s$, from Eq. \eqref{eq:theoreticalloss}, is: $\nabla_{s}{\phi^v_{SLAM}(s,R)} =  \sum_{i=1} ^ {m} v_i\ a^i $, where 
\begin{equation}
\label{eq:grad}
a^i = \\
\left\{
	\begin{array}{ll}
		\mathbf{0} \in \mathbb{R}^m   & \mbox{if } \underset{j=1,\ldots,m}{\max}b_{ij} \le 0\\
		\mathbf{e}_k - \mathbf{e}_i \in \mathbb{R}^m  & \text{otherwise, with } k=  \underset{j=1,\ldots, m}{\argmax}\ {b_{ij}}\\ 
                      
	\end{array}
\right.
\end{equation}
and $\mathbf{e}_i$ is a standard basis vector along coordinate $i$.

Since $\|a^i\|_1 \le 2$, it is easy to see that $\|\nabla_s{\phi^v_{SLAM}(s,R)}\|_{1} \le 2\sum_{i=1}^m v_i$. Since $\ell_1$ norm dominates $\ell_2$ norm,  $\phi^{v}_{SLAM}(s,R)$  is Lipschitz continuous in $\ell_2$ norm whenever we can bound  $\sum_{i=1}^m v_i$. It is easy to check that $\sum_{i=1}^m v^{\text{AP}}_i = 1$ and $\sum_{i=1}^m v^{\text{NDCG}}_i = 1$. Hence, $v^{\text{NDCG}}$ and $v^{\text{AP}}$ induce Lipschitz continuous surrogates, with Lipschitz constant at most 2.\\
{\bf Comparison with Surrogates Derived from Structured Prediction Framework}: We briefly highlight the difference between SLAM and listwise surrogates obtained from the structured prediction framework \citep{chapelle2007,yue2007,chakrabarti2008}.  Structured prediction for ranking models assume that the supervision space is the space of \emph{full rankings} of a document list. Usually a large number of full rankings are compatible with a relevance vector, in which case the relevance vector is arbitrarily mapped to a full ranking. In fact, here is a quote from one of the relevant papers \citep{chapelle2007}, \emph{``It is often the case that this $y_q$ is not unique and we simply take of one of them at random"} ($y_q$ refers to a correct full ranking pertaining to query $q$). Thus, all but one correct full ranking will yield a loss. In contrast, in SLAM, documents with same relevance level are essentially exchangeable (see Eq.~\eqref{eq:theoreticalloss}). Thus, our assumption that documents are sorted according to relevance during design of weight vectors is without arbitrariness, and there will be no change in the amount of loss when documents within same relevance class are compared. 

\section{Perceptron-like Algorithms}
\label{perceptron}

We present a perceptron-like algorithm for learning a ranking function in an online setting, using the SLAM family. Since our proposed perceptron like algorithm works for both NDCG and AP induced losses, for derivation purposes, we denote a performance measure induced loss as \emph{RankingMeasureLoss} (RML). Thus, RML can be NDCG induced loss or AP induced loss. 

{\bf Informal Definition}: The algorithm works as follows. At time $t$, the learner maintains a linear ranking function, parameterized by $w_t$. The learner receives $X_t$, which is the document list retrieved for query $q_t$ and ranks it. Then the ground truth relevance vector $R_t$ is received and ranking function updated according to the perceptron rule.


Let $ b_{ij}= \{\mathbbm{1}(R_i>R_j)(1 +s_j -s_i)\}$. For subsequent ease of derivations, we write SLAM loss from Eq.~\eqref{eq:theoreticalloss} as: $\phi_{SLAM}^v (s^w,R) =  \sum_{i=1} ^ {m} v_i\ c_i,$, where
\begin{equation}
\label{eq:surrogateinperceptron}
\begin{aligned}
& c_i= 
\left\{
	\begin{array}{ll}
		0   & \mbox{if } \underset{j=1,\ldots,m}{\max} b_{ij}\le 0\\
		1 + s^w_k -s^w_i \in \mathbb{R}  & \text{otherwise}\\ 
                      & k=  \underset{j=1,\ldots, m}{\argmax}\ {b_{ij}}.\\
	\end{array}
\right. \\
\end{aligned}
\end{equation}
and $s^w= Xw \in \mathbb{R}^m$.

Like classification perceptron, our perceptron-like algorithm operates on the loss $f_t(w)$, defined on a sequence of data $\{X_t,R_t\}_{t \ge 1}$, produced by an \emph{adaptive adversary} (i.e., an adversary who can see the learner's move before making its move) as follows:

\begin{equation}
\label{eq:functioninperceptron}
\begin{aligned}
& f_t(w)= 
\left\{
	\begin{array}{ll}
		\phi_{SLAM}^{v_t} (s^{w}_t,R_t)  & \mbox{if } \text{RML}(s^{w_t}_t,R_t) \neq 0\\
		0  & \mbox{if } \text{RML}(s^{w_t}_t,R_t) = 0\\                       
	\end{array}
\right. \\
\end{aligned}
\end{equation}
Here, $s^{w}_t=X_tw$ and $v_t=v_t^{\text{NDCG}}$ or $v_t^{\text{AP}}$ depending on whether RML is NDCG or AP induced loss. Since weight vector $v$ depends on relevance vector $R$ (Eq.~\eqref{eq:APweights}, \eqref{eq:ndcgweights}), the subscript $t$ in $v_t$ denotes the dependence on $R_t$.  Moreover, $w_t$ is the parameter produced by our perceptron (Algorithm~\ref{alg:LA}) at the end of step $t-1$, with the adaptive adversary being influenced by the move of perceptron (recall Eq.~\ref{eq:perceptronfunction} and discussion thereafter).

It is clear from Theorem.\ \ref{eq:upperbound1} and Eq.\ \eqref{eq:functioninperceptron} that $f_t(w_t) \ge \text{RML}(s^{w_t}_t,R_t)$.
It should also be noted that  that $f_t(\cdot)$ is convex in either of the two cases. 
Thus, we can run the online gradient descent (OGD) algorithm \citep{zinkevich2003online} to learn the sequence of parameters $w_t$, starting with $w_1=\mathbf{0}$. The OGD update rule, $w_{t+1}= w_t - \eta z_t$, for some $z_t \in \partial{f_t}(w_t)$ and step size $\eta$, requires a subgradient $z_t$ that, in our case, is computed as follows.
When $\text{RML}(s^{w_t}_t,R_t)=0$, we have $z_t=\mathbf{0} \in \mathbb{R}^d$.
When $\text{RML}(s^{w_t}_t,R_t) \neq 0$, we have
\begin{equation}
\label{eq:gradientinperceptron}
\begin{aligned}
& z_t=  X^{\top}_t \left(\sum_{i=1} ^ {m} v_{t,i}\ a_{t,i} \right) \in \mathbb{R}^d, \\
& a_{t,i}= 
\left\{
	\begin{array}{ll}
		\mathbf{0} \in \mathbb{R}^m   & \mbox{if } c^t_i = 0\\
		\mathbf{e}_k - \mathbf{e}_i \in \mathbb{R}^m  & \mbox{if }  c^t_i \neq 0\\ 
                     
	\end{array}	
\right.
\end{aligned}
\end{equation}
where $\mathbf{e}_k$ is the standard basis vector along coordinate $k$ and $c^t_i \in \mathbb{R}$  is as defined in Eq.~\eqref{eq:surrogateinperceptron} (with $s^{w}= s_t^{w_t}= X_t w_t$).


We now obtain a perceptron-like algorithm for the learning to rank problem.
\floatstyle{ruled}
\newfloat{algorithm}{htbp}{loa}
\floatname{algorithm}{Algorithm}
\begin{algorithm}
\caption{Perceptron Algorithm for Learning to Rank}
\label{alg:LA}
\begin{tabbing}
Learning rate $\eta>0$, $w_1=\mathbf{0} \in \mathbb{R}^d$.\\
For \=$t=1$ to $T$ \\
\> Receive $X_t$ (document list for query $q_t$). \\
\> Set $s^{w_t}_t = X_tw_t$ \ , predicted ranking output $p_t$= $\argsort(s^{w_t}_t)$.\\
\> Receive $R_t$ \\
\> {\bf If} $\text{RML}(s^{w_t}_t, R_t) \neq 0$ \= $\qquad$// Note:  $\text{RML}(s^{w_t}_t, R_t)=  \text{RML}(\argsort(s^{w_t}_t), R_t)$ \\ 
\> \ \ $w_{t+1} =w_t - \eta z_t$ \> $\qquad$// $z_t$ is defined in Eq.~\eqref{eq:gradientinperceptron} \\
\> {\bf else}\\
\> \ \ $w_{t+1} =w_t $\\
{\bf End For}
\end{tabbing}
\end{algorithm}

\subsection{Bound on Cumulative Loss}
\label{theorybounds}

We provide a theoretical bound on the cumulative loss (as measured by RML) of perceptron for the learning to rank problem. The technique uses regret analysis of online convex optimization algorithms.
We state the standard OGD bound used to get our main theorem \citep{zinkevich2003online}. An important thing to remember is that OGD guarantee holds for convex functions played by an adaptive adversary, which is important for an OGD based analysis of the perceptron algorithm.
\begin{propositionOCO}
\label{regretboundinperceptron}
Let $f_t$ be a sequence of convex functions. The update rule of function parameter is $w_{t+1}= w_t -\eta z_t$, where $z_t \in \partial f_t(w_t)$. Then for any $w \in \mathbb{R}^d$, the following regret bound holds after $T$ rounds,
\begin{equation}
\label{eq:OGDregret}
\sum_{t=1}^T f_t(w_ t)\ -\sum_{t=1}^T f_t(w) \ \le \ \frac{\|w\|_2^2}{2\eta} + \frac{\eta}{2} \sum_{t=1}^T \|z_t\|_2^2 .
\end{equation}
\end{propositionOCO}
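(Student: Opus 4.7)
The plan is to execute the standard Zinkevich-style analysis for online gradient descent. The key inputs are (i) convexity of each $f_t$, which lets us upper bound the instantaneous regret by a linear expression in the subgradient, and (ii) the Pythagorean-type expansion of $\|w_{t+1} - w\|_2^2$ under the update rule, which lets the linear expression telescope.

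Concretely, first I would invoke convexity of $f_t$ at the comparator $w$ to get
\[
f_t(w_t) - f_t(w) \le \inner{z_t,\ w_t - w}
\]
for any subgradient $z_t \in \partial f_t(w_t)$. Next, I would plug the update rule $w_{t+1} = w_t - \eta z_t$ into the identity
\[
\|w_{t+1} - w\|_2^2 = \|w_t - w\|_2^2 - 2\eta \inner{z_t,\ w_t - w} + \eta^2 \|z_t\|_2^2,
\]
and solve for the inner product:
\[
\inner{z_t,\ w_t - w} = \frac{\|w_t - w\|_2^2 - \|w_{t+1} - w\|_2^2}{2\eta} + \frac{\eta}{2}\|z_t\|_2^2.
\]
Summing from $t=1$ to $T$, the first term telescopes to $\frac{1}{2\eta}\bigl(\|w_1 - w\|_2^2 - \|w_{T+1} - w\|_2^2\bigr)$, which, using $w_1 = \mathbf{0}$ as specified by Algorithm~\ref{alg:LA} and dropping the non-positive $-\|w_{T+1}-w\|_2^2$ term, is at most $\frac{\|w\|_2^2}{2\eta}$. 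Combining with the convexity bound yields the stated inequality.

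There is really no genuine obstacle here: the result is a textbook OGD regret bound, and the only subtlety worth flagging is that the argument goes through verbatim against an adaptive adversary, because at each round the convexity inequality and the algebraic identity for $\|w_{t+1}-w\|_2^2$ depend only on the already-fixed $w_t$ and on the $f_t$ (equivalently $z_t$) that the adversary produces in response. No assumption is needed on how $f_t$ relates to $f_{t-1}$, which is exactly why the bound can be applied to the perceptron-for-ranking setting where the adversary's choice of surrogate depends on $w_t$.
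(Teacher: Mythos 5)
Your proof is correct and is exactly the standard Zinkevich-style argument that the paper relies on (it cites \citet{zinkevich2003online} for this proposition rather than reproducing the proof): convexity to linearize the regret, the expansion of $\|w_{t+1}-w\|_2^2$, telescoping, and $w_1=\mathbf{0}$ to get the $\frac{\|w\|_2^2}{2\eta}$ term. Your remark that the argument is valid against an adaptive adversary is also the same point the paper emphasizes when invoking the bound.
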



We first control the norm of the subgradient $z_t$, defined in Eq.\ \eqref{eq:gradientinperceptron}. To do this, we will need to use the $p \rightarrow q$ norm of matrix.
\begin{defnNorm}
\label{pqnorm}
Let $A \in \mathbb{R}^{m \times n}$ be a matrix. The $p \rightarrow q$ norm of A is:
\begin{equation*}
\|A\|_{p \rightarrow q} = \max_{ v \neq 0} \frac{\|Av\|_q}{\|v\|_p}
\end{equation*}
\end{defnNorm}

\begin{lem}
\label{gradientboundinperceptron}
Let $R_X$ be the bound on the maximum $\ell_2$ norm of the feature vectors representing the documents. Let $v_{t,\mathrm{max}}= \underset{i,j}{\max}\{\frac{v_{t,i}}{v_{t,j}}\}, \ \forall \ i,j$ with $v_{t,i}>0,\  v_{t,j}>0$, and $m$ be bound on number of documents per query. Then we have the following $\ell_2$ norm bound, 
\begin{equation}
\forall \ t,\ \|z_t\|_2^2 \le 4 \ m\  R_X^2 \ v_{t,\mathrm{max}}\  f_t(w_t) \ .
\end{equation}
\end{lem}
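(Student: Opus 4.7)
The plan is to decompose the bound into two independent estimates: a straightforward norm estimate giving $\|z_t\|_2^2 \le 4 R_X^2 \bigl(\sum_{i \in I} v_{t,i}\bigr)^2$, and a structural self-bounding estimate $\bigl(\sum_{i \in I} v_{t,i}\bigr)^2 \le m \, v_{t,\mathrm{max}} \, f_t(w_t)$, where $I := \{i : c_i^t \ne 0\}$. When $\mathrm{RML}(s_t^{w_t}, R_t) = 0$ the lemma is trivial since then $z_t = \mathbf{0}$ and $f_t(w_t) = 0$, so I focus on the nontrivial case.

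For the norm estimate, I plug $z_t = \sum_{i \in I} v_{t,i}(x_{k_i} - x_i)$ into the triangle inequality and use $\|x_j\|_2 \le R_X$ for each document feature vector to get $\|z_t\|_2 \le 2 R_X \sum_{i \in I} v_{t,i}$; squaring yields the $4 R_X^2 \bigl(\sum_{i \in I} v_{t,i}\bigr)^2$ bound immediately.

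The harder structural estimate needs a lower bound on $f_t(w_t)$ in terms of the weight vector. The central observation is that $\mathrm{RML} \ne 0$ implies the existence of a misranked pair $(i_0, j_0)$ with $R_{t,i_0} > R_{t,j_0}$ but $s^{w_t}_{t,i_0} \le s^{w_t}_{t,j_0}$. For this pair, $c_{i_0}^t \ge 1 + (s^{w_t}_{t,j_0} - s^{w_t}_{t,i_0}) \ge 1$, and since $R_{t,i_0} > 0$ the weight $v_{t,i_0}$ is strictly positive for either $v^{\mathrm{AP}}$ or $v^{\mathrm{NDCG}}$. Hence $f_t(w_t) \ge v_{t,i_0} c_{i_0}^t \ge v_{t,i_0} \ge v_{t,\min}$, where $v_{t,\min} := \min_{i:\, v_{t,i} > 0} v_{t,i}$. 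By definition of the ratio $v_{t,\mathrm{max}}$, we then have $\max_i v_{t,i} = v_{t,\mathrm{max}} \cdot v_{t,\min} \le v_{t,\mathrm{max}} f_t(w_t)$.

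Combining with the property $\sum_i v_{t,i} = 1$ (a direct computation for both $v^{\mathrm{AP}}$ and $v^{\mathrm{NDCG}}$, already noted earlier in the paper) and the trivial bound $\sum_{i \in I} v_{t,i} \le m \max_i v_{t,i}$, we obtain $\bigl(\sum_{i \in I} v_{t,i}\bigr)^2 \le \bigl(\sum_{i \in I} v_{t,i}\bigr) \cdot m \max_i v_{t,i} \le 1 \cdot m \, v_{t,\mathrm{max}} \, f_t(w_t)$, which merged with the norm estimate yields the lemma. The main obstacle is this lower bound on $f_t(w_t)$: individual $c_i^t$ values can be arbitrarily close to zero even on mistake rounds, so the argument must isolate the single unit-margin violation forced by $\mathrm{RML} \neq 0$ and then rely on the ratio $v_{t,\mathrm{max}}$ to lift the bound on the smallest positive weight into a bound on the largest.
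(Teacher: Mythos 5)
Your proof is correct and follows essentially the same route as the paper's: the norm bound $\|z_t\|_2 \le 2R_X\sum_i v_{t,i}$, the observation that a mistake round forces a pair with $c^t_{i_0}\ge 1$ and $v_{t,i_0}>0$, and the use of the ratio $v_{t,\mathrm{max}}$ together with $\sum_i v_{t,i}=1$ are exactly the ingredients of the paper's argument. Your splitting of $\bigl(\sum_{i\in I}v_{t,i}\bigr)^2$ into one factor bounded by $1$ and another bounded by $m\,v_{t,\mathrm{max}}\,f_t(w_t)$ (via $v_{t,\min}$) is just a repackaging of the paper's step of multiplying its two separate bounds on $\|z_t\|_2$.
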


\begin{proof}
For a mistake round $t$, we have $z_t = X^{\top}_t(\sum_{i=1}^m v_{t,i}a_{t,i})$ from Eq.~\eqref{eq:gradientinperceptron} . \\
{\bf 1st bound for $z_t$}:
\begin{equation*}
\begin{aligned}
\|X^{\top}_t(\sum_{i=1}^m v_{t,i}\ a_{t,i})\|_2 \le \|X^{\top}_t\|_{1\rightarrow 2}\|\sum v_{t,i} \ a_{t,i}\|_1 \le 2R_X \sum v_{t,i} = 2R_X.
\end{aligned}
\end{equation*}
The first inequality uses the $1 \rightarrow 2$ norm and last inequality holds because $\sum_{i=1}^m v^{NDCG}_i = 1$ and $\sum_{i=1}^m v^{AP}_i = 1$.\\\\
{\bf 2nd bound for $z_t$} (The self-bounding property of SLAM is being used here, to bound the norm of gradient by loss itself):\\
We note that in a mistake round, $\text{RML}(s_t^{w_t},R_t) \neq 0$. Thus, there is at least 1 pair of documents whose ranks are inconsistent with their relevance levels. Mathematically, 
\begin{equation*} 
\exists \ i', k' \ \text{s.t.} \ R_{t,i'} > R_{t,k'}, \  s^{w_t}_{t,i'}<s^{w_t}_{t,k'}. 
\end{equation*}
Now, $\phi_{SLAM}^{v_t}(s^{w_t}_t, R_t)= \sum v_{t,i}\  c^t_i$ (Eq.~\eqref{eq:surrogateinperceptron} ). For $(i',k')$, we have $c^t_{i'} \ge 1 +s^{w_t}_{t,k'} - s^{w_t}_{t,i'}>1$.\\

Since $R_{t,i'}>R_{t,k'}$, document $i'$ has strictly greater than minimum possible relevance, i.e., $R_{t,i'}>0$. By our calculations of weight vector $v$ for both NDCG and AP, we have $v_{t,i'}>0$.\\

Thus, by definition, $v_{t,\mathrm{max}} \ge 1$ (since $v_{t,i'}>0$ and $\frac{v_{t,i'}}{v_{t,i'}}=1$ and $v_{t,\mathrm{max}}= \underset{i,j}{\max}\{\frac{v_{t,i}}{v_{t,j}}\}, \ \forall \ i,j$ with $v_{t,i}>0,\  v_{t,j}>0$).\\

Then, $\forall \ i$, $v_{t,i} \le \ v_{t,\mathrm{max}}\ \cdot v_{t,i'} \le\  v_{t,\mathrm{max}}\ \cdot v_{t,i'}\ \cdot c^t_{i'}$. Thus, we have:
$$\sum_{i=1}^m v_{t,i} \le m\  v_{t,\mathrm{max}}\ v_{t,i'} \ c^t_{i'} \le \  m\  v_{t,\mathrm{max}}(\sum_{i=1}^m v_{t,i} \ c^t_i) = m\  v_{t,\mathrm{max}}\ \phi_{SLAM}^{v_t}(s^{w_t}_t,R_t) .$$
It follows that $\|z_t\|_2 \le 2R_X \sum_i v_{t,i} \le 2 R_X\ m \ v_{t,\mathrm{max}}\ \phi_{SLAM}^{v_t}(s^{w_t}_t,R_t)$.\\\\
Combining 1st and 2nd bound for $z_t$, we get $\|z_t\|_2^2 \le 4 R_X^2 m\   v_{t,max}\ \phi_{SLAM}^{v_t}(s^{w_t}_t,R_t)$, for mistake rounds.\\\\
Since, for non-mistake rounds, we have $z_t=0$ and $f_t(w_t)=0$, we get the final inequality.\\\\
\end{proof}
Taking $\max \limits_{t=1}^T\ v_{t,\mathrm{max}} \le v_{max}$, we have the following theorem, which uses the norm bound on $z_t$:
\begin{thm}
\label{theoryboundinperceptron}
Suppose Algorithm~\ref{alg:LA} receives a sequence of instances ${(X_1,R_1),\ldots,(X_T,R_T)}$ and let $R_X$ be the bound on the maximum $\ell_2$ norm of the feature vectors representing the documents. Then the following inequality holds, after optimizing over learning rate $\eta$, $\forall \ w \in \mathbb{R}^d$:
\begin{equation}
\label{eq:perceptronmistakebound}
\begin{split}
\sum_{t=1}^T \text{RML}(s^{w_t}_t,R_t)  \le  \ \sum_{t=1}^T f_t(w) + \sqrt{4\|w\|_2^2 m R_X^2}\sqrt{\sum_{t=1}^T f_t(w)} + \ 4 \|w\|_2^2  m R_X^2 v_{max} \ .
\end{split}
\end{equation}
In the special case where there exists  $w$ s.t. $f_t(w)=0$, $\forall \ t$, we have
\begin{equation}
 \sum_{t=1}^T \text{RML}(s^{w_t}_t,R_t)\ \le \ 4 \|w\|_2^2 m R_X^2 v_{max} .
\end{equation}
\end{thm}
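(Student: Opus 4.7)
\medskip
\noindent\textbf{Proof Plan.}

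The plan is to combine three ingredients already assembled in the paper: (i) the OGD regret bound (Proposition OGD regret) applied to the sequence $f_t$, which is convex in either branch of Eq.~\eqref{eq:functioninperceptron} and for which $z_t \in \partial f_t(w_t)$ by Eq.~\eqref{eq:gradientinperceptron}; (ii) the self-bounding inequality of Lemma~\ref{gradientboundinperceptron}, $\|z_t\|_2^2 \le 4\,m\,R_X^2\,v_{t,\max}\,f_t(w_t) \le 4\,m\,R_X^2\,v_{\max}\,f_t(w_t)$; and (iii) the fact, noted right after Eq.~\eqref{eq:functioninperceptron} using Theorem~\ref{eq:upperbound1}, that $f_t(w_t) \ge \text{RML}(s^{w_t}_t,R_t)$ in every round. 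Summing (iii) gives $\sum_{t=1}^T \text{RML}(s^{w_t}_t,R_t) \le P$, where $P := \sum_{t=1}^T f_t(w_t)$, so it suffices to bound $P$.

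Let $Q := \sum_{t=1}^T f_t(w)$ for the comparator $w$. Applying (i) and then (ii) yields
\begin{equation*}
P - Q \;\le\; \frac{\|w\|_2^2}{2\eta} + \frac{\eta}{2}\sum_{t=1}^T \|z_t\|_2^2 \;\le\; \frac{\|w\|_2^2}{2\eta} + 2\eta\, m\, R_X^2\, v_{\max}\, P .
\end{equation*}
I would then tune $\eta$ to minimise the right-hand side; setting $\frac{d}{d\eta}\bigl[\tfrac{\|w\|_2^2}{2\eta}+2\eta m R_X^2 v_{\max} P\bigr]=0$ gives $\eta = \|w\|_2 / \sqrt{4 m R_X^2 v_{\max} P}$, and the minimum value is $\|w\|_2\sqrt{4 m R_X^2 v_{\max} P}$. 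Hence
\begin{equation*}
P \;\le\; Q + \|w\|_2 \sqrt{4 m R_X^2 v_{\max}}\,\sqrt{P}.
\end{equation*}

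This is a quadratic inequality in $\sqrt{P}$. Writing $a := \|w\|_2\sqrt{4 m R_X^2 v_{\max}}$, the inequality $P - a\sqrt{P} - Q \le 0$ gives $\sqrt{P} \le \tfrac{a}{2} + \sqrt{\tfrac{a^2}{4}+Q}$, and then $\sqrt{a^2/4+Q}\le a/2 + \sqrt{Q}$ yields $\sqrt{P} \le a + \sqrt{Q}$, so
\begin{equation*}
P \;\le\; Q + 2a\sqrt{Q} + a^2 \;=\; \sum_{t=1}^T f_t(w) + 2\|w\|_2\sqrt{4 m R_X^2 v_{\max}}\sqrt{\textstyle\sum_{t=1}^T f_t(w)} + 4\|w\|_2^2 m R_X^2 v_{\max},
\end{equation*}
which gives Eq.~\eqref{eq:perceptronmistakebound} (up to the placement of the $v_{\max}$ factor under the square root in the middle term). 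The separable specialisation is then immediate: if $f_t(w)=0$ for all $t$, then $Q=0$, the middle and comparator terms vanish, and one is left with $\sum_{t=1}^T \text{RML}(s^{w_t}_t,R_t)\le P \le 4\|w\|_2^2 m R_X^2 v_{\max}$.

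The main obstacle, aside from bookkeeping, is that the upper-bound $P$ appears on both sides of the OGD-plus-self-bounding inequality, so one cannot simply invoke standard OGD with a constant gradient bound; the $\eta$-optimisation has to be done before solving the quadratic in $\sqrt{P}$, and one must be careful not to use a constant gradient bound $\|z_t\|_2\le 2R_X$ in place of the self-bounding inequality at that stage, or the final term $4\|w\|_2^2 m R_X^2 v_{\max}$ (which is what drives the Novikoff-style constant bound in the separable case) would grow with $T$ instead of staying bounded.
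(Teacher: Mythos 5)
Your proposal is correct and follows essentially the same route as the paper: plug the self-bounding gradient estimate of Lemma~\ref{gradientboundinperceptron} into the OGD regret bound, optimize over $\eta$, solve the resulting quadratic in $\sqrt{\sum_t f_t(w_t)}$, and finish with $f_t(w_t)\ge \text{RML}(s^{w_t}_t,R_t)$. The only differences are cosmetic: the paper uses the trick $x-b\sqrt{x}-c\le 0 \implies x\le b^2+c+b\sqrt{c}$, giving a middle term $\|w\|_2\sqrt{4mR_X^2 v_{\max}}\sqrt{\sum_t f_t(w)}$ rather than your slightly looser $2\|w\|_2\sqrt{4mR_X^2 v_{\max}}\sqrt{\sum_t f_t(w)}$, and the $v_{\max}$ you (rightly) carry under that square root also appears in the analogous Theorem~\ref{thm:k-dependence}, so its absence from the stated middle term of Eq.~\eqref{eq:perceptronmistakebound} appears to be a typo; neither point affects the separable-case constant bound.
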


\begin{proof}
The proof follows by plugging in expression for $\|z_t\|_2^2$ (Lemma~\ref{gradientboundinperceptron}) in OGD equation (Prop. OGD Regret), optimizing over $\eta$, using the algebraic trick: $x - b\sqrt{x} -c \le 0 \implies x \le b^2 + c + b\sqrt{c}$ and then using the inequality $f_t(w_t) \ge \text{RML}(s^{w_t}_t,R_t)$. 
\end{proof}
{\bf Note}: The perceptron bound, in Eq.~\ref{eq:perceptronmistakebound}, is a loss bound, i.e., the left hand side is cumulative NDCG/AP induced loss while right side is function of cumulative surrogate loss. We discuss in details the significance of this bound later.

Like perceptron for binary classification, the constant in Eq.~\ref{eq:perceptronmistakebound} needs to be expressed in terms of a ``margin". A natural definition of margin in case of ranking data is as follows: let us assume that there is a linear scoring function parameterized by a unit norm vector $w_\star$, such that all documents for all queries are ranked not only correctly, but correctly with a \emph{margin} $\gamma$:
\begin{equation}
\label{eq:margin}
\begin{aligned}
\min_{t=1}^T \min_{i,j: R_{t,i} > R_{t,j}} w_\star^\top X_{t,i} - w_\star^\top X_{t,j} \geq \gamma .
\end{aligned}
\end{equation}

\begin{cor}
\label{cor:margin}
If the margin condition~\eqref{eq:margin} holds, then total loss, for both NDCG and AP induced loss, is upper bounded by $\tfrac{4 m R_X^2 v_{max}}{\gamma^2}$, a bound independent of the number of instances in the online sequence.
\end{cor}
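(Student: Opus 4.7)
The plan is to mimic the classification perceptron argument used right after Theorem~\ref{theoryboundinclassificationperceptron}: take the unit vector $w_\star$ from the margin hypothesis, rescale it by $1/\gamma$, verify that this rescaled vector drives the surrogate loss to zero on every round, and then invoke the ``separable case'' of Theorem~\ref{theoryboundinperceptron}.

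Concretely, I would set $w := w_\star/\gamma$, so $\|w\|_2^2 = 1/\gamma^2$. With this choice, $s^w_t = X_t w$, so for any $t$ and any pair $(i,j)$ with $R_{t,i} > R_{t,j}$, the margin assumption~\eqref{eq:margin} yields
\begin{equation*}
s^w_{t,i} - s^w_{t,j} \;=\; \tfrac{1}{\gamma}\bigl(w_\star^\top X_{t,i} - w_\star^\top X_{t,j}\bigr) \;\ge\; 1.
\end{equation*}
This is exactly the margin constraint defining $\phi^{v_t}_{SLAM}$ in Eq.~\eqref{eq:lossdef} with $\Delta = 1$: choosing $\delta = \mathbf{0}$ is feasible, so $\phi^{v_t}_{SLAM}(s^w_t, R_t) = 0$. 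Moreover, since the scores strictly respect the relevance ordering, $\argsort(s^w_t)$ ranks the documents consistently with $R_t$ and thus $\text{RML}(s^w_t, R_t) = 0$ as well (recall that NDCG or AP attain their maximum value $1$ under such a ranking). Either way, by the definition of $f_t$ in Eq.~\eqref{eq:functioninperceptron}, we get $f_t(w) = 0$ for every $t$.

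With the hypothesis $f_t(w) = 0$ for all $t$ in hand, the separable-case conclusion of Theorem~\ref{theoryboundinperceptron} applies directly, giving
\begin{equation*}
\sum_{t=1}^T \text{RML}(s^{w_t}_t, R_t) \;\le\; 4 \|w\|_2^2\, m\, R_X^2\, v_{\max} \;=\; \frac{4 m R_X^2 v_{\max}}{\gamma^2},
\end{equation*}
which is the claimed bound, independent of $T$.

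I do not foresee a serious obstacle: the only substantive step is the verification that the rescaled margin witness makes the SLAM surrogate vanish, and that is immediate from the constraint form in Eq.~\eqref{eq:lossdef}. The bookkeeping point to be careful about is that $f_t$ is defined piecewise (zero on non-mistake rounds and $\phi^{v_t}_{SLAM}$ otherwise), so one should note that in both branches $f_t(w) = 0$ once the rescaled separator is plugged in, so no case analysis on $\text{RML}(s^{w_t}_t, R_t)$ is needed for the comparator.
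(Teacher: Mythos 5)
Your proof is correct and follows essentially the same route as the paper's: rescale the margin witness to $w=w_\star/\gamma$, observe that the SLAM surrogate (and hence $f_t(w)$ in both branches of Eq.~\eqref{eq:functioninperceptron}) vanishes for every $t$, and plug into the separable case of Theorem~\ref{theoryboundinperceptron}. The remark about $\text{RML}(s^w_t,R_t)=0$ is superfluous (the branching is on the learner's score $s^{w_t}_t$, not the comparator's), but it does no harm.
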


\begin{proof}

Fix a $t$ and the example $(X_t,R_t)$. Set $w = w_\star/\gamma$. For this $w$, we have
\[
\min_{i,j: R_{t,i} > R_{t,j}} w^\top X_{t,i} - w^\top X_{t,j} > 1 ,
\]
which means that
\[
\min_{i,j: R_{t,i} > R_{t,j}} s^w_{t,i} - s^w_{t,j} > 1
\]
This immediately implies that $\mathbbm{1}(R_{t,i} > R_{t,j})(1 + s^w_{t,j} - s^w_{t,i}) \leq 0$, $\forall \ i,j$ . Therefore, $\phi_{SLAM}^{v_t} (s^{w}_t,R_t)=0$ and hence $f_t(w) = 0$. Since this holds for all $t$, we have $\sum_{t=1}^T f_t(w)=0$.

\end{proof}


\subsubsection{Perceptron Bound-General Discussion} 

We remind once again that $\text{RML}(s^{w_t}_t,R_t)$ is either $1- \text{NDCG}(s^{w_t}_t,R_t)$ or 1$- \text{AP}(s^{w_t}_t,R_t)$, depending on measure of interest. \\

{\bf Importance of learning rate parameter $\eta$}: Like the classification perceptron, Algorithm~\ref{alg:LA} also has the learning rate parameter $\eta$ embedded, and the optimal upper bound on loss is obtained by optimizing over $\eta$. However, unlike classification perceptron, the performance is not independent of $\eta$. The prediction at each round is the ranking obtained from sorted order of score, i.e., $p_t= \argsort (X_t w_t)$.  Let $\mathcal{M}_t$ indicate the rounds, up to time point  $t-1$, where the algorithm did not produce perfect ranking. Starting from $w_1= \vec{0}$, unraveling $w_t$, we get $p_t = \argsort( \sum_{i \in \mathcal{M}_t} - \eta X_t \cdot z_i)$. Now, had $z_i$ been independent of $\eta$, then $p_t$, which is the sorted order of score vector, would have been independent of scaling factor $\eta>0$. However, each $z_i$ is dependent on $w_i$ implicitly (Eq.~\ref{eq:gradientinperceptron}), which themselves are dependent of $\eta$ (recall for classification perceptron, $z_i = - y_i x_i$, i.e., independent of $w_i$ during mistake round $i$). To clarify, we consider, during a mistake round, two score vector $s^1$ and $s^2$, where $s^2= \eta s^1$. Had subgradient $z$, during a mistake round, been indeed independent of $w$ (and hence score $s= X \cdot w$), then $z$ would have been same for both $s^1$ and $s^2$. However, this is not the case. To see this, note that $c_i$ (Eq.~\ref{eq:surrogateinperceptron}), for some $i$, can be 0 for $s^1$ but non-zero for $s^2$, depending on value of $\eta$, which affects the gradient. 

{\bf Dependence of perceptron bound on number of documents per query}: The perceptron bound in Eq.\ \ref{eq:perceptronmistakebound} is meaningful only if $v_{\mathrm{max}}$ is a finite quantity. 

For AP, it can be seen from the definition of $v^{\text{AP}}$ in Eq.~\ref{eq:APweights} that $v_{\mathrm{max}} =1 $.  Thus, for \emph{AP induced loss, the constant in the perceptron bound is}:  $\tfrac{4 m R_X^2 }{\gamma^2}$.

For NDCG, $v_{\mathrm{max}}$ depends on maximum relevance level. Assuming maximum relevance level is finite (in practice, maximum relevance level is usually below $5$), $v_{\mathrm{max}}= O(\log(m))$. Thus, for \emph{NDCG induced loss, the constant in the perceptron bound is}: $\tfrac{4 m \log(m)  R_X^2 }{\gamma^2}$.\\

{\bf Significance of perceptron bound}: The main perceptron bound is given in Eq.~\ref{eq:perceptronmistakebound}, with the special case being captured in Corollary~\ref{cor:margin}. At first glance, the bound might seem non-informative because the left side is the cumulative NDCG/AP induced loss bound, while the right side is a function of the cumulative surrogate loss.
 
The first thing to note is that the perceptron bound is derived from the regret bound in Eq.~\ref{eq:OGDregret}, which is the well-known regret bound of the OGD algorithm applied to an arbitrary convex, Lipschitz surrogate. So, even ignoring the bound in Eq.~\ref{eq:perceptronmistakebound}, the perceptron algorithm is a valid online algorithm, applied to the sequence of convex functions $f_t(\cdot)$, to learn ranking function $w_t$, with a meaningful regret bound. Second, as we had mentioned in the introduction, our perceptron bound is the extension of perceptron bound in classification, to the cumulative NDCG/AP induced losses in the learning to rank setting. This can be observed by noticing the similarity between Eq.~\ref{eq:perceptronmistakebound} and Eq.~\ref{eq:classification-perceptronbound}. In both cases, the the cumulative target loss on the left is bounded by a function of the cumulative surrogate loss on the right, where the surrogate is the hinge (and hinge like SLAM) loss.

The interesting aspects of perceptron loss bound becomes apparent on close investigation of the cumulative surrogate loss term $\sum_{t=1}^T f_t(w)$ and comparing with the regret bound. It is well known that when OGD is run on any convex, Lipschitz surrogate, the guarantee on the regret scales at the rate $O(\sqrt{T})$. So, if we only ran OGD on an arbitrary convex, Lipschitz surrogate, then, even with the assumption of existence of a perfect ranker,  the upper bound on the cumulative loss would have scaled as $O(\sqrt{T})$. However, in the perceptron loss bound, if $\sum_{t=1}^T f_t(w)= o(T^{\alpha})$, then the upper bound on the cumulative loss would scale as $O(T^{\alpha})$, which can be much better than $O(T^{1/2})$ for $\alpha < 1/2$. In the best case of $\sum_{t=1}^T f_t(w)=0$, the total cumulative loss would be bounded, irrespective of the number of instances.

{\bf Comparison and contrast with perceptron for classification}: The perceptron for learning to rank is an extension of the perceptron for classification, both in terms of the algorithm and the loss bound. To obtain the perceptron loss bounds in the learning to rank setting, we had to address multiple non-trivial issues, which do not arise in the classification setting. Unlike in classification, the NDCG/AP losses are not $\{0, 1\}$-valued. The analysis is trivial in classification perceptron since on a mistake round, the absolute value of gradient of hinge loss is 1, which is same as the loss itself. In our setting, Lemma~\ref{gradientboundinperceptron} is crucial, where we exploit the structure of SLAM surrogate to bound the square of gradient by the surrogate loss.

\subsubsection{Perceptron Bound Dependent On NDCG Cut-Off Point}

The bound on the cumulative loss in Eq.~\eqref{eq:perceptronmistakebound} is dependent on $m$, the maximum number of documents per query. It is often the case in learning to rank that though a list has $m$ documents, the focus is on the top $k$ documents ($k \ll m$) in the order sorted by score. The measure used for top-$k$ documents is $\text{NDCG}_k$ (Eq.~\ref{eq:NDCGk}) (there does not exist an equivalent definition for AP). 

We consider a modified set of weights $v^{\text{NDCG}_k}$ s.t. $\phi_{SLAM}^{v^{\text{NDCG}_k}}(s,R) \ge 1- \text{NDCG}_k(s,R)$ holds $\forall \ s$, for every $R$. \emph{We provide the definition of $v^{\text{NDCG}_k}$ later in the proof of Theorem\ref{thm:k-dependence} }.

Overloading notation with $v_t= v^{\text{NDCG}_k}_t$, let $v_{t,max}= \underset{i,j}{\max} \{\dfrac{v_{t,i}}{v_{t,j}}\}$ with $v_{t,i}>$0, $v_{t,j}>$0 and $v_{max} \ge \max_{t=1}^T v_{t,max}$. 

\begin{thm} 
\label{thm:k-dependence}
Suppose the perceptron algorithm receives a sequence of instances ${(X_1,R_1),\ldots,(X_T,R_T)}$. Let $k$ be the cut-off point of NDCG. Also, for any $w \in \mathbb{R}^d$, let $f_t(w)$ be as defined in Eq.~\eqref{eq:functioninperceptron}, but with $\phi_{SLAM}^{v_t}(s^w_t,R_t)=  \phi_{SLAM}^{v^{\text{NDCG}_k}_t}(s^w_t,R_t)$. Then, the following inequality holds, after optimizing over learning rate $\eta$,
\begin{equation}
\label{eq:perceptronmistakeboundfork}
\begin{aligned}
\sum_{t=1}^T (1-\text{NDCG}_k(s^{w_t}_t,R_t)) \le  \sum_{t=1}^T f_t(w) + \sqrt{4 \|w\|_2^2  k R_X^2 v_{\mathrm{max}}}\sqrt{\sum_{t=1}^T f_t(w)} + \ 4 \|w\|_2^2  k R_X^2 v_{\mathrm{max}} .
\end{aligned}
\end{equation}
In the special case where there exists  $w$ s.t. $f_t(w)=0$, $\forall \ t$, we have
\begin{equation}
\sum_{t=1}^T (1-\text{NDCG}_k(s^{w_t}_t,R_t))\  \le \ 4 \|w\|_2^2 k R_X^2 v_{\mathrm{max}} .
\end{equation}
\end{thm}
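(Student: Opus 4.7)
The plan is to mirror the proof of Theorem~\ref{theoryboundinperceptron}, replacing the weight vector $v^{\text{NDCG}}$ by a new weight vector $v^{\text{NDCG}_k}$ supported only on the top $k$ most-relevant positions. Localizing the self-bounding argument to $k$ nonzero coordinates (instead of $m$) will produce the desired $k$-dependence in the bound.

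Step 1 (weight vector and upper bound). Assume, as before, that documents are indexed so that $R_1 \ge R_2 \ge \ldots \ge R_m$. Define
\[
v^{\text{NDCG}_k}_i = \frac{G(R_i) D(i)}{Z_k(R)} \quad \text{for } i \le k, \qquad v^{\text{NDCG}_k}_i = 0 \quad \text{for } i > k.
\]
Two properties are needed: (i) $\sum_i v^{\text{NDCG}_k}_i = 1$, which is immediate since $Z_k(R) = \sum_{i=1}^k G(R_i) D(i)$ under the sorted order; and (ii) $\phi_{SLAM}^{v^{\text{NDCG}_k}}(s,R) \ge 1 - \text{NDCG}_k(s,R)$ for all $s, R$. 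Property (ii) can be verified by the same argument that establishes the NDCG upper bound in Theorem~\ref{eq:upperbound1} (Appendix~A), now restricted to the first $k$ positions of the ideal sorted order, since NDCG$_k$ discards contributions from positions beyond $k$.

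Step 2 (modified gradient bound). Prove the analog of Lemma~\ref{gradientboundinperceptron} with $m$ replaced by $k$: on a mistake round ($\text{NDCG}_k(s^{w_t}_t, R_t) < 1$), one has $\|z_t\|_2^2 \le 4 k R_X^2 v_{t,\max} f_t(w_t)$. The first bound $\|z_t\|_2 \le 2 R_X \sum_i v_{t,i} = 2 R_X$ via the $1 \to 2$ matrix norm is unchanged. For the self-bounding piece, argue that on a mistake round there exists a pair $(i',k')$ with $R_{t,i'} > R_{t,k'}$, $s^{w_t}_{t,i'} < s^{w_t}_{t,k'}$, and $i' \le k$ (so that $v_{t,i'} > 0$). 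Indeed, if $\text{NDCG}_k(s^{w_t}_t, R_t) < 1$ then the top $k$ slots of $\pi_{s^{w_t}_t}$ must place some strictly less relevant document ahead of a document whose relevance is among the top $k$, otherwise the ideal NDCG$_k$ value would be attained. With such $i'$ in hand, $c^t_{i'} > 1$, and because only coordinates $1,\ldots,k$ carry weight,
\[
\sum_{i=1}^m v_{t,i} \;\le\; k\, v_{t,\max}\, v_{t,i'} \;\le\; k\, v_{t,\max}\, v_{t,i'}\, c^t_{i'} \;\le\; k\, v_{t,\max}\, \phi_{SLAM}^{v_t}(s^{w_t}_t, R_t),
\]
which, combined with $\|z_t\|_2 \le 2 R_X \sum_i v_{t,i}$, yields the claimed norm bound.

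Step 3 (OGD regret and algebraic trick). Insert the new subgradient norm bound into the OGD regret inequality (Proposition OGD regret), optimize over $\eta$, and apply the identity $x - b\sqrt{x} - c \le 0 \Rightarrow x \le b^2 + c + b\sqrt{c}$ exactly as in the proof of Theorem~\ref{theoryboundinperceptron}. Using the upper bound property to replace $f_t(w_t)$ on the left by $1 - \text{NDCG}_k(s^{w_t}_t, R_t)$, and taking $v_{\max} \ge \max_t v_{t,\max}$, gives Eq.~\eqref{eq:perceptronmistakeboundfork}; specializing to $\sum_t f_t(w) = 0$ gives the separable-case statement.

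The main obstacle I expect is the self-bounding step in Step 2: arguing cleanly that a round with $\text{NDCG}_k < 1$ necessarily exposes a violating pair whose "larger relevance'' endpoint sits among the top $k$ entries in relevance-sorted order. Handling ties in the relevance vector is delicate, since documents sharing a relevance level are exchangeable and contribute nothing to NDCG$_k$ when permuted among themselves; the strictness $R_{t,i'} > R_{t,k'}$ must be extracted from the fact that NDCG$_k < 1$ forces a strict improvement to exist across distinct relevance classes. Once this is in place, the remainder is a routine book-keeping adaptation of the proof of Theorem~\ref{theoryboundinperceptron} with $m$ replaced by $k$.
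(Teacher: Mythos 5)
Your proposal is correct and follows essentially the same route as the paper: the identical truncated weight vector $v^{\text{NDCG}_k}$, verification that it still upper bounds the $\text{NDCG}_k$-induced loss (the paper carries this out by an explicit four-case analysis on where each document $i \le k$ lands relative to the cut-off), the Lemma~\ref{gradientboundinperceptron} self-bounding argument with $m$ replaced by $k$, and the same OGD-regret-plus-algebraic-trick conclusion. The ``main obstacle'' you flag—that a round with $\text{NDCG}_k<1$ must expose a violating pair whose more-relevant member carries positive weight (i.e.\ has index at most $k$ in relevance-sorted order)—is indeed the crux, your contrapositive sketch of it is sound, and the paper addresses it only implicitly via cases 3 and 4 of its upper-bound case analysis.
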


{\bf Discussion}: Assuming maximum relevance level is finite, we have $v_{\mathrm{max}}= O(\log(k))$ (using definition of $v^{\text{NDCG}_k}$). Thus, the \emph{ constant term in the perceptron bound for $\text{NDCG}_k$ induced loss is}: $4 \|u\|^2  k \log(k) R_X^2 $. This is a significant improvement from original error term, even though the perceptron algorithm is running on queries with $m$ documents, which can be very large. A margin dependent bound can be defined in same way as before.

\begin{proof}

We remind again that ranking performance measures only depend on the permutation of documents and individual relevance level. They do not depend on the identity of the documents. Documents with same relevance level can be considered to be interchangeable, i.e., relevance levels create equivalence classes. Thus, w.l.o.g., we assume that $R_1 \ge R_2 \ge \ldots \ge R_m$ and documents with same relevance level are sorted according to score. Also, $\pi^{-1}(i)$ means position of document $i$ in permutation $\pi$.

We define $v^{\text{NDCG}_k}$ as  
\begin{equation}
\label{eq:truncatedndcgweights}
v^{\text{NDCG}_k}_i = \\
\left\{
	\begin{array}{ll}
		\frac{G(R_i)D(i)}{Z_k(R)}  & \mbox{if } i=1,2,\ldots,k\\
		0  & \mbox{if } i=k+1,\ldots,m .\\ 
           \end{array}
\right.
\end{equation}

We now prove the upper bound property that $\phi_{SLAM}^{v^{\text{NDCG}_k}}(s,R) \ge 1- \text{NDCG}_k(s,R)$ holds $\forall \ s$, for every $R$. We have the following equations:

\begin{equation*}
\begin{aligned}
& \dfrac{\sum_{i=1}^m G(R_i)D(i)\mathbbm{1}(i\le k)}{Z_k(R)}=1 \  \text{and} \ \text{NDCG}_k(s,R)= \dfrac{\sum_{i=1}^m G(R_i)D(\pi_s^{-1}(i))\mathbbm{1}(\pi_s^{-1}(i) \le k)}{Z_k(R)}.\\
& \implies 1- \text{NDCG}_k(s,R) = \dfrac{\sum_{i=1}^m G(R_i)\big(D(i)\mathbbm{1}(i \le k)- D(\pi_s^{-1}(i))\mathbbm{1}(\pi_s^{-1}(i) \le k)\big)}{Z_k(R)}. 
\end{aligned}
\end{equation*}

For $i >k$: $D(i)\mathbbm{1}(i \le k)=$ 0 and since $D(\pi_s^{-1}(i))$ is non-negative, every term in $1- \text{NDCG}_k(s,R)$ is non-positive for $i >k$.

For $i \le k$, there are four possible cases:
\begin{enumerate}

\item $i \ge \pi_s^{-1}(i)$ and $\pi_s^{-1}(i) > k$. This is infeasible since $i \le k$.

\item  $i \ge \pi_s^{-1}(i)$ and $\pi_s^{-1}(i) \le k$. In this case, the numerator in $1- \text{NDCG}_k$ is $G(R_i)(D(i) - D(\pi_s^{-1}(i)))$. Now, since $D(\cdot)$ is a decreasing function, the contribution of the document $i$ to NDCG induced loss is non-positive and can be ignored (since SLAM by definition is sum of positive weighted indicator functions).

\item $i < \pi_s^{-1}(i)$ and $\pi_s^{-1}(i) > k$. In this case, the numerator in $1- \text{NDCG}_k$ is $G(R_i)D(i)$.  Since $ i < \pi_s^{-1}(i)$, that means document $i$ was outscored by a document $j$, where $i < j$ (otherwise, document $i$ would have been put in a position same or above what it is at currently, by $\pi_s$, i.e, $i \ge \pi_s^{-1}(i)$.) Moreover, $R_i >R_j$ (because of the assumption that within same relevance class, scores are sorted). Hence the indicator of SLAM at $i$ would have come on and $v^{\text{NDCG}_k}_i = \frac{G(R_i)(D(i)}{Z_k(R)}$ .

\item $i < \pi_s^{-1}(i)$ and $\pi_s^{-1}(i) \le k$. In this case, the numerator in $1- \text{NDCG}_k$ is $G(R_i)(D(i) - D(\pi_s^{-1}(i)))$. By same reason as c.), the indicator of SLAM at $i$ would have come on and $v^{\text{NDCG}_k}_i > \dfrac{G(R_i)(D(i)- D(\pi_s^{-1}(i)))}{Z_k(R)}$ by definition of $v^{\text{NDCG}_k}$ and the fact that $D(i) > D(i)- D(\pi_s^{-1}(i))$.

\end{enumerate}

Hence, the upper bound property holds.

The proof of Theorem \ref{thm:k-dependence} now follows directly following the argument in the proof of Lemma \ref{gradientboundinperceptron}, by noting a few things:

a) $\sum_{i=1}^k v^{\text{NDCG}_k}_i = 1$. b) $\phi_{SLAM}^{v^{\text{NDCG}_k}}(s,R)$ has same structure as $\phi_{SLAM}^{v^{\text{NDCG}}}(s,R)$ but with different weights. Hence structure of $z_t$ remains same but with weights of $v^{\text{NDCG}_k}$.

Hence, 1st bound on gradient of $z_t$ in proof of Lemma \ref{gradientboundinperceptron} remains same. For the 2nd bound on gradient of $z_t$, the crucial thing that changes is that $\sum_{i=1}^m v^t_i \le k v_{t,\mathrm{max}} v_{t,i'} c^t_{i'}$, with the new definitions of $v_{t,\mathrm{max}}$ according to $v^{\text{NDCG}_k}$. This implies $2R_X \sum v_{t,i} \le 2 R_X\ k \ v_{t,\mathrm{max}}\ \phi^{v_t}(s^{w_t}_t,R_t)$.

\end{proof}

\section{Minimax Bound on Cumulative NDCG/AP Induced Loss}
\label{minimax-bound}
We discuss a lower bound achievable on separable dataset and another perceptron like algorithm achieving the bound.
\subsection{Lower Bound}
The following theorem gives a lower bound on the cumulative NDCG/AP induced loss, achievable by any deterministic online Algorithm.
\begin{thm}
\label{lower-bound}
Suppose the number of documents per query is $m \ge 2$ and relevance vectors are restricted to being binary graded. Let $\mathcal{X} = \{X \in \mathbb{R}^{m \times d}|\  \|X_{j:}\|_2 \le R_X\}$ and $\frac{R^2_X}{\gamma^2} \le d$. Then, for any deterministic online algorithm, there exists a ranking dataset which is separable by margin $\gamma$ (Eq.~\ref{eq:margin}), on which the algorithm suffers $\Omega(\floor{\frac{R^2_X}{\gamma^2}})$ cumulative NDCG/AP induced loss.
\end{thm}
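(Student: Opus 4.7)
The plan is to adapt the classical perceptron lower bound for binary classification (the ``hide-and-seek'' adversary on an orthonormal basis) to the ranking setting by using queries with exactly $m=2$ documents and binary relevance. With $m=2$ documents the ranking decision reduces to the sign of $(X_{t,1} - X_{t,2})^\top w$, so a ranking algorithm becomes, effectively, a binary classifier on the difference vector; this lets me port the classification lower bound almost verbatim while still costing the learner a constant amount of NDCG/AP loss per error.

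Concretely, let $d' = \floor{R_X^2/\gamma^2}$ (which is at most $d$ by hypothesis). In round $t \in \{1,\ldots,d'\}$, the adversary presents the query $X_t \in \reals^{m\times d}$ with $X_{t,1} = R_X e_t$ and $X_{t,2} = 0$, and $X_{t,j} = 0$ for $j \geq 3$ (so $\|X_{t,j}\|_2 \leq R_X$ trivially). The deterministic learner must commit to a permutation $\pi_t$ of the $m$ documents. After seeing $\pi_t$, the adversary chooses a binary relevance vector that forces $\pi_t$ to be suboptimal: define $y_t = +1$ if the learner placed document~1 above document~2 in $\pi_t$, else $y_t = -1$. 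Then set
\[
R_t = \begin{cases} (0,1,0,\ldots,0) & \text{if } y_t = +1,\\ (1,0,0,\ldots,0) & \text{if } y_t = -1.\end{cases}
\]
In either case, document~1 and document~2 are ranked in the wrong order relative to their relevance, while the remaining $m-2$ documents (with relevance $0$) contribute nothing.

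The next step is to lower-bound the per-round loss. For NDCG, the induced ranking places the unique relevant document at position~2 instead of position~1, yielding
\[
1-\text{NDCG}(s_t^{w_t},R_t) \;=\; 1 - \frac{1}{\log_2 3}
\]
(and the same bound, with $k$ replaced appropriately, for $\text{NDCG}_k$ with $k\geq 2$). For AP, with a single relevant document placed at position $2$, we get $\text{AP} = 1/2$, so the induced loss is exactly $1/2$. Either way, every round contributes a \emph{constant} $c > 0$ to the cumulative loss, so the total loss is at least $c \cdot d' = \Omega(\floor{R_X^2/\gamma^2})$.

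It remains to verify that the stream $(X_1,R_1),\ldots,(X_{d'},R_{d'})$ is separable with margin $\gamma$ by a unit-norm vector. I will take $w_\star = \frac{\gamma}{R_X}\sum_{t=1}^{d'} y_t e_t$. By orthonormality of the basis and the fact that $d'\leq R_X^2/\gamma^2$,
\[
\|w_\star\|_2 \;=\; \frac{\gamma}{R_X}\sqrt{d'} \;\leq\; 1,
\]
so $w_\star$ (rescaled up to unit norm if desired, which only increases the margin) satisfies the unit-norm requirement. For query $t$, the only relevance comparison that matters is between document~1 and document~2, and by construction $w_\star^\top X_{t,1} - w_\star^\top X_{t,2} = y_t \cdot \frac{\gamma}{R_X}\cdot R_X = \gamma$ when $y_t = +1$ and the reversed difference equals $\gamma$ when $y_t = -1$; all other pairwise comparisons are vacuous since the remaining documents share the minimal relevance. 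Thus the margin condition~\eqref{eq:margin} holds with margin exactly $\gamma$, and the lower bound follows. The main subtlety — and the only place one has to be careful — is ensuring that the adversary's feature distribution is \emph{fixed in advance} so that the construction works against \emph{any} deterministic online algorithm: this is taken care of by the standard observation that in a deterministic protocol the features $\{R_X e_t\}_{t=1}^{d'}$ can be announced obliviously while only the labels $y_t$ adapt to the learner's output.
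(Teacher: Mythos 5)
Your overall strategy (oblivious orthogonal features, labels chosen adversarially after seeing the learner's permutation, then exhibiting a separator built from the learner's own mistakes) is the same adversary-style argument the paper uses, but your concrete construction has a genuine gap. First, a fixable sign error: with $y_t=+1$ (learner puts document 1 above document 2) you set $R_t=(0,1,0,\ldots,0)$, so the separator must score document 2 \emph{above} document 1 by $\gamma$; your $w_\star=\tfrac{\gamma}{R_X}\sum_t y_t e_t$ gives $w_\star^\top X_{t,1}-w_\star^\top X_{t,2}=+\gamma$, i.e.\ it agrees with the learner and disagrees with the relevance labels, so Eq.~\eqref{eq:margin} fails in every round. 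Negating $w_\star$ repairs this for $m=2$.

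The deeper problem is the zero-feature padding when $m\ge 3$ (the theorem fixes an arbitrary $m\ge 2$, so the dataset must have $m$ documents per query). Whenever the learner ranks document 1 above document 2, your adversary must declare document 2 relevant to force a mistake; but document 2 has feature vector $0$, as do the irrelevant documents $3,\ldots,m$, so for \emph{every} $w$ all of these documents receive the identical score $0$ and the constraint $w^\top X_{t,2}-w^\top X_{t,j}\ge\gamma$ for $j\ge 3$ is unsatisfiable. Hence the stream you build is not separable by any margin at all, and the theorem's separability hypothesis is violated, no matter how $w_\star$ is chosen. The paper avoids exactly this trap: it gives the two contending documents antipodal nonzero features $\pm R_X e_{i+1}$ and gives all padding documents the common nonzero feature $R_X e_1$, reserving the first coordinate $w_{\star,1}=-\gamma/(2R_X)$ to push every padding document down to score $-\gamma/2$, while the adaptively signed coordinate $w_{\star,i+1}$ sends the relevant one of the first two documents to $+\gamma/2$ and the irrelevant one to $-\gamma/2$; this yields margin exactly $\gamma$ in all rounds while still forcing an $\Omega(1)$ NDCG/AP loss per round (your per-round loss figures should also be stated as lower bounds --- the learner may rank the relevant document below position 2 --- but that only helps you). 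To repair your proof you would need a similar device ensuring the padding documents can be strictly outscored by whichever of the two contending documents the adversary makes relevant, even when that document is the one the learner favored less.
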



\begin{proof}
Let $T= \floor{\frac{R^2_X}{\gamma^2}}-1$. Since $\frac{R^2_X}{\gamma^2} \le d$, hence $T + 1\le d$ and $(T+1) \gamma^2 \le R_X^2$. Let a ranking dataset consist of the following $T$ document matrices, for $1 \le i \le T$: 
\begin{equation}
X_i = \begin{bmatrix}
R_X \cdot e_{i+1}^\top \\
- R_X \cdot e_{i+1}^\top \\
R_X \cdot e_1^\top \\
\vdots \\
R_X \cdot e_1^\top
\end{bmatrix} \in \mathcal{X},
\end{equation}
where $e_i$ is the unit vector of length $d$ with $1$ in $i$th coordinate and $0$ in others. These document matrices are presented to a deterministic algorithm $\mathcal{A}$ in order.

The relevance vectors for the dataset are set as follows: for matrix $X_i$, if $\mathcal{A}$ puts the 1st document at position $1$ then $R_{i,1}=0, R_{i,2}=1$. Otherwise, $R_{i,1}=1, R_{i,2}=0$. In either case, $R_{i,j} = 0$ for $j > 2$.
With this choice, note that $R_{i,1} > R_{i,2}$ iff $R_{i} = (1,0,0,\ldots,0)^\top$ and $R_{i,1} < R_{i,2}$ iff $R_{i} = (0,1,0,\ldots,0)^\top$.

We have to make sure that, irrespective of what $\mathcal{A}$ does, we can always find a unit norm weight vector $	w_\star$ such that the dataset is actually separable with margin $\gamma$.
Let a ranking function parameter $w_\star \in \mathbb{R}^d$ be defined as follows: $w_{\star,1} = \frac{-\gamma}{2 \cdot R_X}$,
\begin{equation*}
\begin{aligned}
w_{\star,i}= 
\left\{
	\begin{array}{ll}
		{\frac{\gamma}{2 \cdot R_X}}   & \mbox{if } R_{i,1}>R_{i,2}\\
		 {\frac{-\gamma}{2 \cdot R_X}}  & \text{otherwise}\\ 
                     
	\end{array}
\right.
,\quad \text{for } 2\le i\le T+1 .
\end{aligned}
\end{equation*}
For $T+1 < i \le d$, set $w_{\star,i}=0$.
The unit norm condition holds because $\|w_\star\|_2^2= \frac{(T+1) \gamma^2}{4 \cdot R_X^2} \le 1$.

The margin condition holds as follows. Fix $i \in [T]$.
If $R_{i,1}>R_{i,2}$, then $$X_iw_\star = (\gamma/2,-\gamma/2,-\gamma/2,\ldots,-\gamma/2).$$Otherwise, if $R_{i,1}<R_{i,2}$, then
$$X_i w_\star = (-\gamma/2,+\gamma/2,-\gamma/2,\ldots,-\gamma/2).$$
Therefore, in either case, $w_\star$ scores the only relevant document above all irrelevant document by a margin of exactly $\gamma$.

It is clear that, for the above dataset, $\mathcal{A}$ will make a ranking mistake in each round. But we need to argue a bit more: we need to show that the NDCG/AP induced loss per round will be $\Omega(1)$ on each round. Note that a mistake by itself does not guarantee a constant loss incurred since the minimum possible non-zero loss for these loss functions is dependent on $m$.

We have two cases to consider. First, when $\mathcal{A}$ puts document $1$ at the top. Note that, in this case, $R_i = (0,1,0,\ldots,0)^\top$. The least loss $\mathcal{A}$ incurs in such a scenario is when it puts document $2$ in position $2$. Therefore, AP is at most $1/2$ and NDCG is at most $\frac{1/\log_2(1+2)}{1/\log_2(1+1)}$ which means that $1-AP$ and $1-NDCG$ are both $\Omega(1)$. In the second case, $\mathcal{A}$ does not put document $1$ at the top. 
In this case, $R_{i} = (1,0,0,\ldots,0)^\top$ which means that an irrelevant document gets placed at the top. The least loss $\mathcal{A}$ incurs in this scenario is when it puts document $1$ in position $2$. The AP/NDCG induced losses therefore have again the same minimum values in this case as in the previous one. Since the loss incurred in either of the two types of mistakes in $\Omega(1)$, we conclude that
the cumulative NDCG/AP induced loss will be $\Omega(T)= \Omega(\floor{\frac{R_X^2}{\gamma^2}})$. 
\end{proof}

\subsection{Algorithm Achieving Lower Bound}
We will show that the lower bound established in the previous section is actually the minimax bound, achievable by another perceptron type  algorithm. Thus, Algorithm~\ref{alg:LA} is sub-optimal in terms of the bound achieved, since it has a dependence on number of documents per query.

%
%

Our algorithm is inspired by the work of \cite{crammer2002new}. Following their work, we define a new surrogate via a constrained optimization problem for ranking as follows:

\begin{equation}
\label{eq:newlossdef}
\begin{aligned}
\phi_C(s,R) &= \min \delta \\
 \text{s.t.}&  \ \ \ \  \delta \ge  0, \ \ \ \  s_i +\delta \ge  \Delta + s_j, \ \text{if} \  R_i > R_j, \ \ \forall \ i,j.
\end{aligned}
\end{equation}

The above constrained optimization problem can be recast as a hinge-like convex surrogate:
\begin{equation}
\label{eq:newsurrogate}
\phi_C(s, R) = \max_{i \in [m]} \max_{j \in [m]} \ind{R(i)>R(j)} \left( 1 + s_j - s_i \right)_+ .
\end{equation}

The key difference between the above surrogate and the previously proposed SLAM family of surrogates is that the above surrogate does not adapt to different ranking measures. It also does not exhibit the listwise property since
it treats an incorrectly ranked pair in a uniform way independent of where they are placed by the ranking induced by $s$.

Similar to Algorithm~\ref{alg:LA}, we define a sequence of losses $f_t(w)$, defined on a sequence of data $\{X_t,R_t\}_{t \ge 1}$,as follows:

\begin{equation}
\label{eq:functioninnewperceptron}
\begin{aligned}
& f_t(w)= 
\left\{
	\begin{array}{ll}
		\phi_{C} (s^{w}_t,R_t)  & \mbox{if } \text{RML}(s^{w_t}_t,R_t) \neq 0\\
		0  & \mbox{if } \text{RML}(s^{w_t}_t,R_t) = 0\\                       
	\end{array}
\right. \\
\end{aligned}
\end{equation}
Here, $s^{w}_t=X_tw$ and $w_t$ is the parameter produced by Algorithm~\ref{alg:OA} at time $t$, with the adaptive adversary being influenced by the move of perceptron. Note that $f_t(w_t) \ge  \text{RML}(s^{w_t}_t, R_t)$, since, $f_t(w)$ is always non-negative and if $\text{RML}(s^{w_t}_t, R_t)>0$, there is at least one pair of documents whose scores do not agree with their relevances. At that point, the surrogate value becomes greater than $1$.

During a mistake round, the gradient $z$ is calculated as follows: let $i^*,j^*$ be any pair of indices that achieve the max in Eq.~\ref{eq:newsurrogate}. Then,
\begin{equation}
\label{eq:newgradient}
z= \nabla_w \phi_C(s^w,R)= X^{\top} \{ (-e_{i^*} + e_{j^*}) \ind{R(i^*)>R(j^*)}\ind{1 + s_{j^*} -s_{i^*} \ge 0}\} .
\end{equation}

Note that if there are multiple index pairs achieving the max, then an arbitrary subgradient can be written as a convex combination of subgradients computed using each of the pairs.

\floatstyle{ruled}
\newfloat{algorithm}{htbp}{loa}
\floatname{algorithm}{Algorithm}
\begin{algorithm}
\caption{New Perceptron Algorithm Achieving Lower Bound}
\label{alg:OA}
\begin{tabbing}
Learning rate $\eta>0$, $w_1=\mathbf{0} \in \mathbb{R}^d$.\\
For \=$t=1$ to $T$ \\
\> Receive $X_t$ (document list for query $q_t$). \\
\> Set $s^{w_t}_t = X_tw_t$ \ , predicted ranking output $p_t= \argsort(s^{w_t}_t)$.\\
\> Receive $R_t$ \\
\> {\bf If} $\text{RML}(s^{w_t}_t, R_t) \neq 0$ \=$\qquad$// Note:  $\text{RML}(s^{w_t}_t, R_t)=  \text{RML}(\argsort(s^{w_t}_t), R_t)$ \\ 
\> \ \ $w_{t+1} =w_t - \eta z_t$ \>$\qquad$// $z_t$ is defined in Eq.~\eqref{eq:newgradient} \\
\> {\bf else}\\
\> \ \ $w_{t+1} =w_t $\\
{\bf End For}
\end{tabbing}
\end{algorithm}

We have the following loss bound for Algorithm~\ref{alg:OA}.

\begin{thm}
\label{theoryboundinonlinealg}
Suppose Algorithm~\ref{alg:OA} receives a sequence of instances ${(X_1,R_1),\ldots,(X_T,R_T)}$. Let $R_X$ be the bound on the maximum $\ell_2$ norm of the feature vectors representing the documents and $f_t(w)$ be as defined in Eq.~\ref{eq:functioninnewperceptron}. Then the following inequality holds, after optimizing over learning rate $\eta$, $\forall \ w \in \mathbb{R}^d$:
\begin{equation}
\begin{split}
\sum_{t=1}^T \text{RML}(s^{w_t}_t,R_t)  \le  \ \sum_{t=1}^T f_t(w) + 2\|w\|_2 R_X \sqrt{\sum_{t=1}^T f_t(w)} + \ 4 \|w\|_2^2  R_X^2  \ .
\end{split}
\end{equation}
In the special case where there exists  $w$ s.t. $f_t(w)=0$, $\forall \ t$, we have
\begin{equation}
 \sum_{t=1}^T \text{RML}(s^{w_t}_t,R_t)\ \le \ 4 \|w\|_2^2 R_X^2  .
\end{equation}
\end{thm}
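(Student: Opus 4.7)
The plan is to mimic the proof of Theorem~\ref{theoryboundinperceptron}, but to exploit the simpler structure of $\phi_C$ to obtain a self-bounding inequality that is free of the factor $m\, v_{\max}$. The three ingredients I need are: (i) a uniform $\ell_2$ bound on the subgradient $z_t$, (ii) a lower bound on $f_t(w_t)$ in any mistake round, and (iii) the OGD regret bound applied to the convex sequence $f_t$.

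First I would bound $\|z_t\|_2$. On a non-mistake round $z_t = \mathbf{0}$, so consider a mistake round. From Eq.~\eqref{eq:newgradient}, $z_t = X_t^\top(-\mathbf{e}_{i^*} + \mathbf{e}_{j^*})$ for the maximizing pair $(i^*,j^*)$ when the indicators fire. Using the $1\to 2$ matrix norm as in Lemma~\ref{gradientboundinperceptron},
\[
\|z_t\|_2 \;\le\; \|X_t^\top\|_{1\to 2}\,\|{-\mathbf{e}_{i^*} + \mathbf{e}_{j^*}}\|_1 \;\le\; 2R_X,
\]
since each row of $X_t$ has $\ell_2$ norm at most $R_X$.

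The main step, and the place where the argument is tighter than in Lemma~\ref{gradientboundinperceptron}, is the self-bounding inequality
\[
\|z_t\|_2^2 \;\le\; 4R_X^2\, f_t(w_t).
\]
On a mistake round $\text{RML}(s^{w_t}_t,R_t)\neq 0$, so there must exist indices $i',j'$ with $R_{t,i'} > R_{t,j'}$ and $s^{w_t}_{t,j'} \ge s^{w_t}_{t,i'}$; hence $1 + s^{w_t}_{t,j'} - s^{w_t}_{t,i'} \ge 1$. From the definition~\eqref{eq:newsurrogate} of $\phi_C$ this immediately gives $\phi_C(s^{w_t}_t, R_t) \ge 1$, i.e.\ $f_t(w_t) \ge 1$. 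Combined with $\|z_t\|_2^2 \le 4R_X^2$ this yields the displayed self-bound on mistake rounds, and trivially on non-mistake rounds. The hard part (the one that truly distinguishes Algorithm~\ref{alg:OA} from Algorithm~\ref{alg:LA}) is precisely this step: because $\phi_C$ takes a max rather than a weighted sum, a single inverted pair forces the surrogate past the margin threshold $1$, so the $m$ and $v_{\max}$ factors disappear.

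Finally I would plug the self-bound into the OGD regret inequality of Proposition~(OGD regret):
\[
\sum_{t=1}^T f_t(w_t) - \sum_{t=1}^T f_t(w) \;\le\; \frac{\|w\|_2^2}{2\eta} + \frac{\eta}{2}\sum_{t=1}^T \|z_t\|_2^2 \;\le\; \frac{\|w\|_2^2}{2\eta} + 2\eta R_X^2 \sum_{t=1}^T f_t(w_t).
\]
Optimizing the free parameter $\eta$ (effectively $\eta = \|w\|_2/(2R_X\sqrt{\sum_t f_t(w_t)})$) converts this into the quadratic-in-$\sqrt{L}$ inequality $L - 2\|w\|_2 R_X\sqrt{L} - F \le 0$, where $L = \sum_t f_t(w_t)$ and $F = \sum_t f_t(w)$. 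Applying the algebraic trick $x - b\sqrt{x} - c \le 0 \Rightarrow x \le b^2 + c + b\sqrt{c}$ and then using $f_t(w_t) \ge \text{RML}(s^{w_t}_t, R_t)$ yields the first claimed bound. The separable case follows by specializing to $F = 0$.
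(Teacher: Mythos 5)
Your proposal is correct and follows essentially the same route as the paper's own proof: the $1\to 2$ norm bound $\|z_t\|_2 \le 2R_X$, the observation that a mistake round forces $\phi_C(s^{w_t}_t,R_t)\ge 1$ so that $\|z_t\|_2^2 \le 4R_X^2 f_t(w_t)$, and then the OGD regret bound with the $\eta$-optimization, the quadratic trick, and $f_t(w_t)\ge \text{RML}(s^{w_t}_t,R_t)$. Your remark that the max structure of $\phi_C$ (rather than the weighted sum in SLAM) is exactly what removes the $m\,v_{\max}$ factor correctly identifies the key difference from Lemma~\ref{gradientboundinperceptron}.
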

\begin{proof}
We first bound the $\ell_2$ norm of the gradient. From Eq.~\ref{eq:newgradient}, we have: 

{\bf 1st bound for $z_t$}:
\begin{equation*}
\begin{aligned}
\|z_t\|_2 \le & \|X_t^{\top}\|_{1 \rightarrow 2} \|\{ (-e_{i^*} + e_{j^*}) \ind{R(i^*)>R(j^*)}\ind{1 + s_{j^*} -s_{i^*} \ge 0}\}\|_1 \le 2R_X.
\end{aligned}
\end{equation*}

{\bf 2nd bound for $z_t$}:

On a mistake round, since there exists at least 1 pair of documents, whose scores and relevance levels are discordant. Hence,  $\phi_C(s^w,R)>1$. Hence, $\|z_t\|_2 \le 2 R_X \le 2 R_X \phi_C(s_t^{w_t},R_t)$.

Thus, $\|z_t\|_2^2 \le 4 R_X^2 \phi_C(s^{w_t}_t,R_t)$. Since $\|z_t\|_2=0$ on non-mistake round, we finally have:

$\|z_t\|_2^2  \le 4 R_X^2 f_t(w_t), \ \forall \ t $. 

The proof then follows as previous:  by plugging in expression for $\|z_t\|^2$ in OGD equation (Prop. OGD Regret), optimizing over $\eta$, using the algebraic trick: $x - b\sqrt{x} -c \le 0 \implies x \le b^2 + c + b\sqrt{c}$ and then using the inequality $f_t(w_t) \ge \text{RML}(s^{w_t}_t,R_t)$. 
%
\end{proof}

As before, we can immediately derive a margin based bound.

\begin{cor}
\label{cor:newmargin}
If the margin condition~\eqref{eq:margin} holds, then total loss, for both NDCG and AP induced loss, is upper bounded by $\tfrac{4 R_X^2 }{\gamma^2}$, a bound independent of the number of instances in the online sequence.
\end{cor}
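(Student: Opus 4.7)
\textbf{Proof plan for Corollary~\ref{cor:newmargin}.} The approach mirrors the proof of Corollary~\ref{cor:margin}: reduce the margin condition to the vanishing of the per-round surrogate $f_t$ at a specific comparator $w$, and then invoke the special-case conclusion of Theorem~\ref{theoryboundinonlinealg}. The only thing that changes relative to Corollary~\ref{cor:margin} is that the surrogate in play is now $\phi_C$ from Eq.~\eqref{eq:newsurrogate} rather than a member of the SLAM family, but $\phi_C$ is built from exactly the same pairwise hinge-type terms, so the same comparator works.

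The plan is as follows. First, set $w = w_\star/\gamma$, so that $\|w\|_2 = 1/\gamma$ since $w_\star$ was assumed to be unit norm in~\eqref{eq:margin}. Dividing the margin condition~\eqref{eq:margin} through by $\gamma$ gives
\[
\min_{t} \min_{i,j:\, R_{t,i} > R_{t,j}} \bigl(s^w_{t,i} - s^w_{t,j}\bigr) \;\ge\; 1,
\]
so for every $t$ and every pair $(i,j)$ with $R_{t,i} > R_{t,j}$ we have $1 + s^w_{t,j} - s^w_{t,i} \le 0$, whence $(1 + s^w_{t,j} - s^w_{t,i})_+ = 0$. Plugging into the max-representation of $\phi_C$ in Eq.~\eqref{eq:newsurrogate} yields $\phi_C(s^w_t,R_t) = 0$ for every $t$, and therefore $f_t(w) = 0$ for every $t$ by the definition~\eqref{eq:functioninnewperceptron}.

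Finally, I would feed this comparator into the special-case bound of Theorem~\ref{theoryboundinonlinealg}, which gives
\[
\sum_{t=1}^T \text{RML}(s^{w_t}_t,R_t) \;\le\; 4\,\|w\|_2^2\, R_X^2 \;=\; \frac{4 R_X^2}{\gamma^2},
\]
independent of $T$, as claimed. There is no real obstacle here; the only substantive step is checking that the pairwise hinge terms inside the outer max of $\phi_C$ are individually killed by the margin, which is immediate. The corollary is essentially a direct reading of Theorem~\ref{theoryboundinonlinealg} under the margin assumption, and its main content is comparison with Corollary~\ref{cor:margin}: the bound here lacks the spurious factors of $m$ and $v_{\max}$, matching the lower bound of Theorem~\ref{lower-bound}.
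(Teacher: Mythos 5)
Your proposal is correct and follows exactly the route the paper intends: the paper's proof of Corollary~\ref{cor:newmargin} is simply "similar to Corollary~\ref{cor:margin}," i.e., take the comparator $w = w_\star/\gamma$, observe that the margin condition kills every pairwise hinge term in $\phi_C$ so that $f_t(w)=0$ for all $t$, and invoke the special case of Theorem~\ref{theoryboundinonlinealg}. Nothing is missing.
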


\begin{proof}

Proof is similar to that of Corollary~\ref{cor:margin}.
\end{proof}

{\bf Importance of learning rate parameter $\eta$}: Algorithm~\ref{alg:OA} also has the learning rate parameter $\eta$ embedded, and the optimal upper bound on loss is obtained by optimizing over $\eta$. However, like classification perceptron, and unlike Algorithm~\ref{alg:LA}, the performance is independent of $\eta$. To see this, we once again use prediction $p_t = \argsort( \sum_{i \in \mathcal{M}_t} - \eta X_t \cdot z_i)$. The prediction is independent of $\eta$ if $z_i$ is independent of $w_i$. Once again, we consider, during a mistake round, two score vector $s^1$ and $s^2$, where $s^2= \eta s^1$. If subgradient $z$, during a mistake round, is indeed independent of $w$ (and hence score $X \cdot w$), then $z$ is same for both $s^1$ and $s^2$. During a mistake round, there is at least one pair of documents, such that $R_i> R_j$, but $s^1_i <s^1_j$. Let us assume that the pair $(i,j)$ obtains the maximum in Eq.~\ref{eq:newsurrogate}. The gradient is as given in Eq.~\ref{eq:newgradient}, with $s$ replaced by $s^1$. However, even for $s^2$, the maximum value in Eq.~\ref{eq:newsurrogate} is obtained for the pair $(i,j)$ (since the differences between any pair of score values are scaled by the same factor $\eta$, going from $s^1$ to $s^2$). Hence, the gradient, in Eq.~\ref{eq:newgradient}, would remain same for $s^1$ and $s^2$.

{\bf Comparison of Algorithm~\ref{alg:LA} and Algorithm~\ref{alg:OA}}: Both of our proposed perceptron-like algorithms can be thought of analogues of the classic perceptron in the learning to rank setting. Algorithm~\ref{alg:OA} achieves the minimax optimal bound on separable datasets, unlike Algorithm~\ref{alg:LA}, whose bound scales with number of documents per query. However, Algorithm~\ref{alg:OA} operates on a surrogate (Eq~\ref{eq:newsurrogate}) which is not listwise in nature, even though it forms an upper bound on the listwise ranking measures. To emphasize, the surrogate does not differentially weigh between errors at different points of the ranked list, which is an important property of popular surrogates in learning to rank. As our empirical results show (Section~\ref{experiments}), on commercial datasets which are not separable, Algorithm~\ref{alg:OA} has significantly worse performance than Algorithm~\ref{alg:LA}.

\section{Related Work in Perceptron for Ranking}
\label{related-work}
There exist a number of papers in the literature dealing with perceptron in the context of ranking. We will compare and contrast our work with existing work, paying special attention to the papers whose setting come closest to ours. 

%
First, we would like to point out that, to the best of our knowledge, there is no work that establishes a number of documents independent bound for NDCG, cut-off at the top $k$ position (Theorem~\ref{thm:k-dependence}). Moreover, we believe our work, for the first time, formally establishes minimax bound, achievable by any deterministic online algorithm, in the learning to rank setting, under the assumption of separability.

\cite{crammer2001} were one of the first  to introduce perceptron in ranking. The setting as well as results of their perceptron are quite different from ours. Their paper assumes there is a fixed set of ranks $\{1,2,\ldots,k\}$. An instance is a vector of the form $x \in \mathbb{R}^d$ and the supervision is one of the $k$ ranks. The perceptron has to learn the correct ranking of $x$, with the loss being $1$ if correct rank is not predicted. The paper does not deal with query-documents list and does not consider learning to rank measures like NDCG/AP. 

The results of \cite{wangsolar} have some similarity to ours. Their paper introduces algorithms for online learning to rank, but does not claim to have any ``perceptron type" results. However, their main theorem (Theorem 2) has a perceptron bound flavor to it, where the cumulative NDCG/AP losses are upper bounded by cumulative surrogate loss and a constant. The major differences with our results are these: \cite{wangsolar} consider a different instance/supervision setting and consequently have a different surrogate loss. It is assumed that for each query $q$, only a pair of documents $(x_i,x_j)$ are received at each online round, with the supervision being $\{+1,-1\}$, depending on whether $x_i$ is more/less relevant than $x_j$. The surrogate loss is defined at pair of documents level, and not at a query-document matrix level. Moreover, there is no equivalent result to our Theorem~\ref{thm:k-dependence}, neither is any kind of minimax bound established. 

The recent work of \cite{jain2015predtron} also contains results similar to ours. One the one hand their predtron algorithm is more general. But on the other hand, the bound achieved by predtron, applied to the ranking case, has a scaling factor $O(m^5)$, significantly worse than our linear scaling. Moreover, it does not have the $\text{NDCG}_k$ bounds scaling as a function of $k$ proved anywhere.

There are other, less related papers; all of which deal with perceptron in ranking, in some form or the other. \cite{ni2008} introduce the concept of margin in a particular setting, with corresponding perceptron bounds. However, their paper does not deal with query-document matrices, nor NDCG/AP induced losses. The works of \cite{elsas2008} and \cite{harrington2003} introduce online perceptron based ranking algorithms, but do not establish theoretical results.  \cite{shen2005} give a perceptron type algorithm with a theoretical guarantee, but in their paper, the supervision is in form of full rankings (instead of relevance vectors). A few recent papers deal with generalization ability of online learning algorithms with pair-wise surrogates \citep{wang2012generalization, kar2013generalization}, online AUC optimization \citep{gao2013one} and optimization at top ranked position \citep{li2014top} However, none of the papers are related to perceptron for learning to rank.
\section{Experiments}
\label{experiments}
We conducted experiments on a simulated dataset and three large scale industrial benchmark datasets. Our results demonstrate the following: 
\begin{itemize}
\item We simulated a margin $\gamma$ separable dataset. On that dataset, the two algorithms (Algorithm~\ref{alg:LA} and Algorithm~\ref{alg:OA}) ranks all but a finite number of instances correctly, which agrees with our theoretical prediction.
\item On three commercial datasets, which are not separable, Algorithm~\ref{alg:LA} shows competitive performance with a strong baseline algorithm, indicating its practical usefulness. Algorithm~\ref{alg:OA} performs quite poorly on two of the datasets, indicating that despite minimax optimality under margin separability, it has limited practical usefulness.
\end{itemize}

{\bf Baseline Algorithm}: We compared our algorithms with the online version of the popular ListNet ranking algorithm \citep{Cao2007}. ListNet is not only one of the most cited ranking algorithms (over 800 citations according to Google Scholar), but also one of the most validated algorithms \citep{tax2015}. 
We conducted online gradient descent on the cross-entropy convex surrogate of ListNet to learn a ranking algorithm in an online manner. While there exists ranking algorithms which have demonstrated better empirical performance than ListNet, they are generally based on non-convex surrogates with non-linear ranking functions. These algorithms cannot be converted in a straight forward way (or not at all) into online algorithms which learn from streaming data. We also did not compare our algorithms with other perceptron algorithms since they do not usually have similar setting to ours and would require modifications. \emph{We emphasize that our objective is not simply to add one more ranking algorithms to the huge variety that already exists. Our experiments on real data are to show that Algorithm~\ref{alg:LA} has competitive performance and has a major advantage over Algorithm~\ref{alg:OA}, due to the difference in the nature of surrogates being used for the two algorithms}.

{\bf Experimental Setting}: For all datasets, we report average $\text{NDCG}_{10}$ and average AP over a time horizon. Average  $\text{NDCG}_{10}$ at iteration $t$ is the cumulative $\text{NDCG}_{10}$ up to iteration $t$, divided by $t$ (same for average AP). We remind that at each iteration $t$, a document matrix is ranked by the algorithm, with the performance (according to $\text{NDCG}_{10}$ or AP) measured against the true relevance vector corresponding to the document matrix. For all the algorithms, the corresponding best learning rate $\eta$ was fixed after conducting experiments with multiple different rates and observing the best time averaged $\text{NDCG}_{10}$/ AP over a fixed time interval.

{\bf Simulated Dataset}:  We simulated a margin separable dataset (Eq.~\eqref{eq:margin}). Each query had $m=20$ documents, each document represented by $20$ dimensional feature vector, and five different relevance level $\{4,3,2,1,0\}$, with relevances distributed uniformly over the documents. The feature vectors of equivalent documents (i.e., documents with same relevance level) were generated from a Gaussian distribution, with documents of different relevance levels generated from different Gaussian distribution. A $20$ dimensional unit norm ranker was generated from a Gaussian distribution, which induced separability with margin. Fig.~\ref{Fig1} compares performance of Algorithm~\ref{alg:LA}, Algorithm~\ref{alg:OA} and online ListNet. The $\text{NDCG}_{10}$ values of the perceptron type algorithms rapidly converge to 1, validating their finite cumulative loss property. To re-iterate, since for separable datasets, cumulative NDCG induced loss is bounded by constant, hence, the time averaged NDCG should rapidly converge to $1$. The OGD algorithm for ListNet has only a regret guarantee of $O(\sqrt{t})$; hence the time averaged regret converges at rate $O(\frac{1}{\sqrt{t}})$, i.e., its convergence is significantly slower than the perceptron-like algorithms.  

\begin{figure}[h]
\begin{center}
\centerline{\includegraphics[height=5cm]{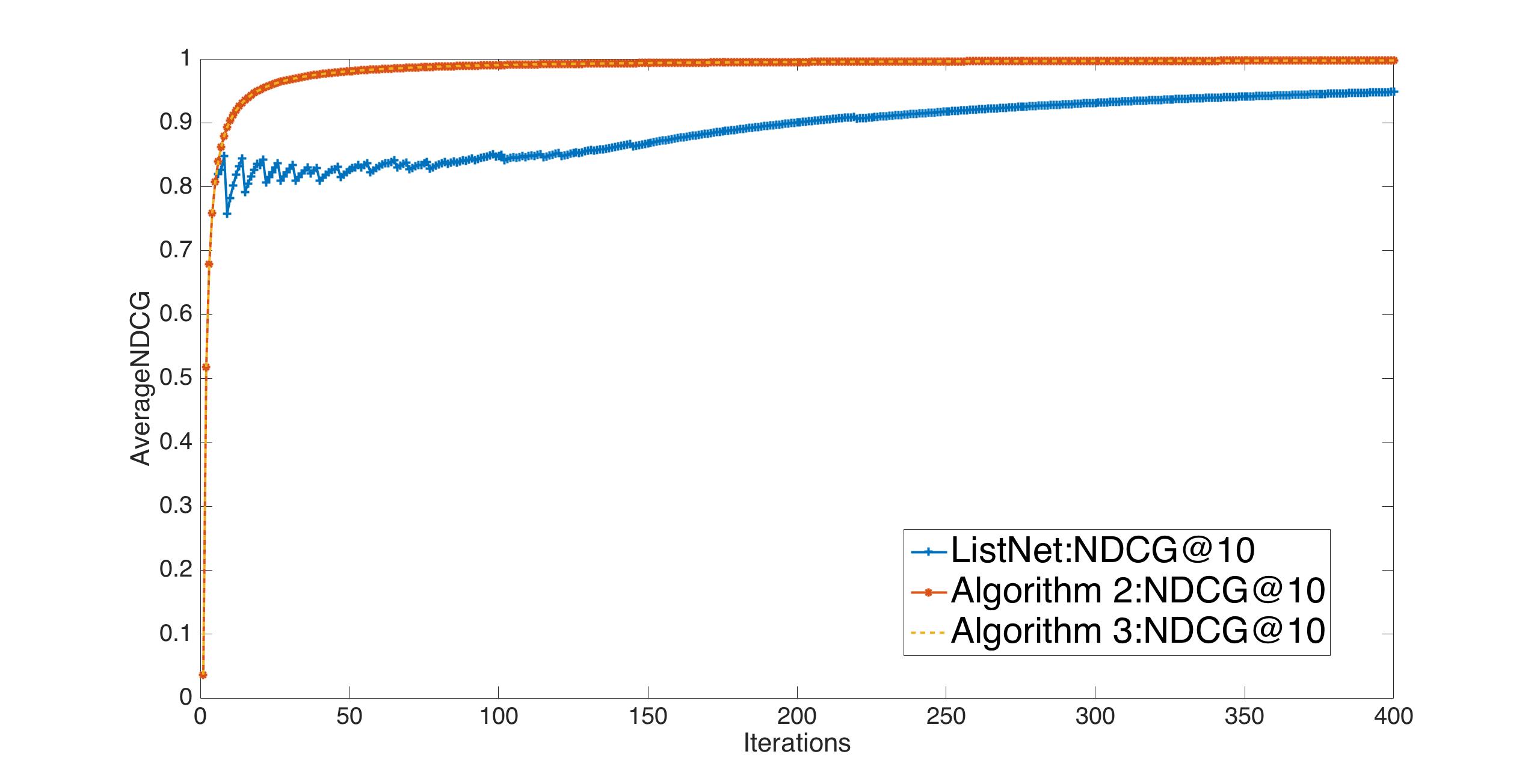}}
\caption{Time averaged $\text{NDCG}_{10}$ for Algorithm~\ref{alg:LA}, Algorithm~\ref{alg:OA} and ListNet, for separable dataset. The two perceptron-like algorithms have imperceptible difference.} \label{Fig1}
\end{center}
\end{figure} 

\begin{figure}[ht!]
     \begin{center}
     \subfigure[MSLR-WEB10K ]{%
            \label{Fig3}
           \includegraphics[height=5cm]{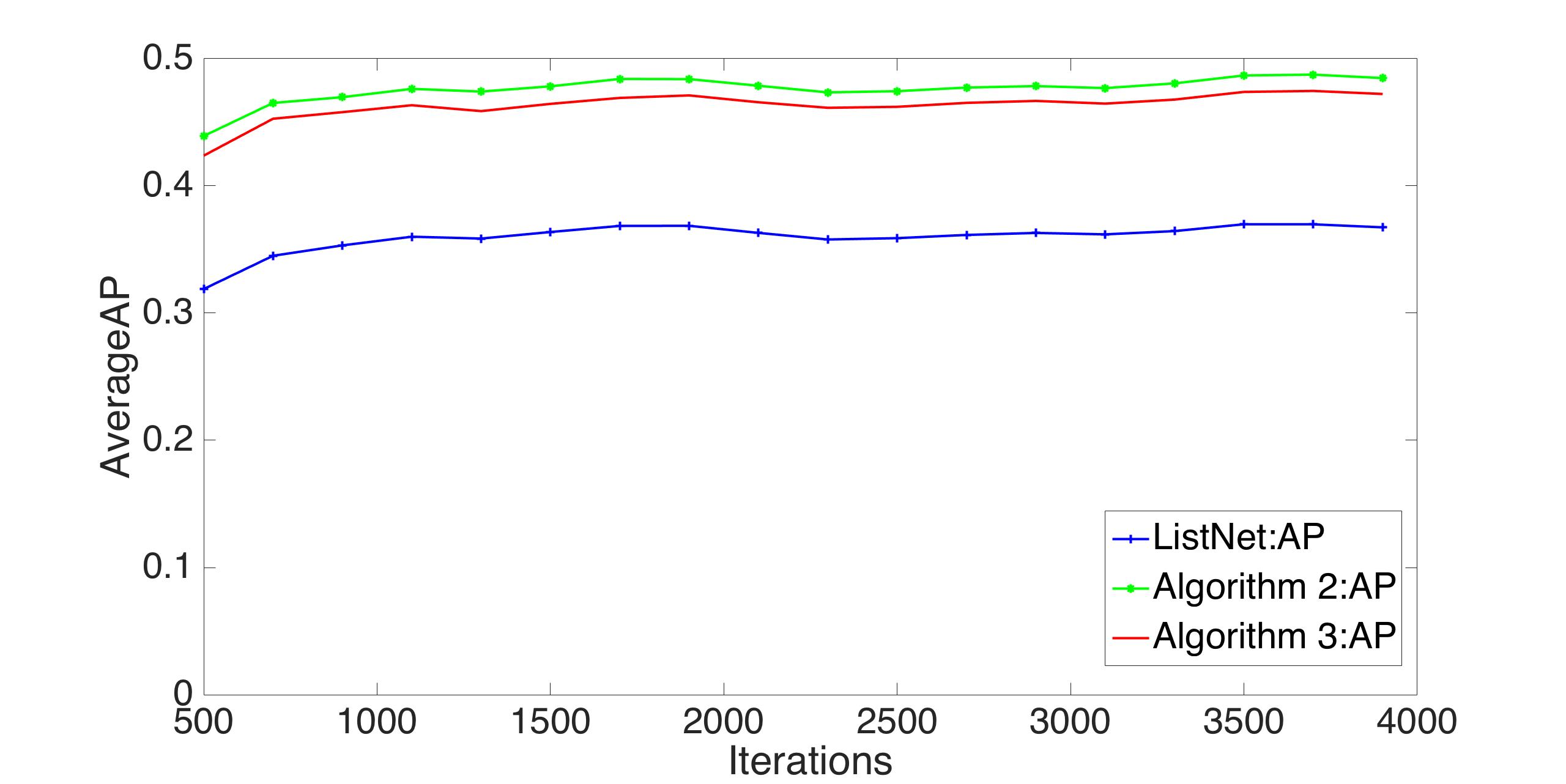}
        }\\
      \subfigure[Yahoo]{%
            \label{Fig4}
            \includegraphics[height=5cm]{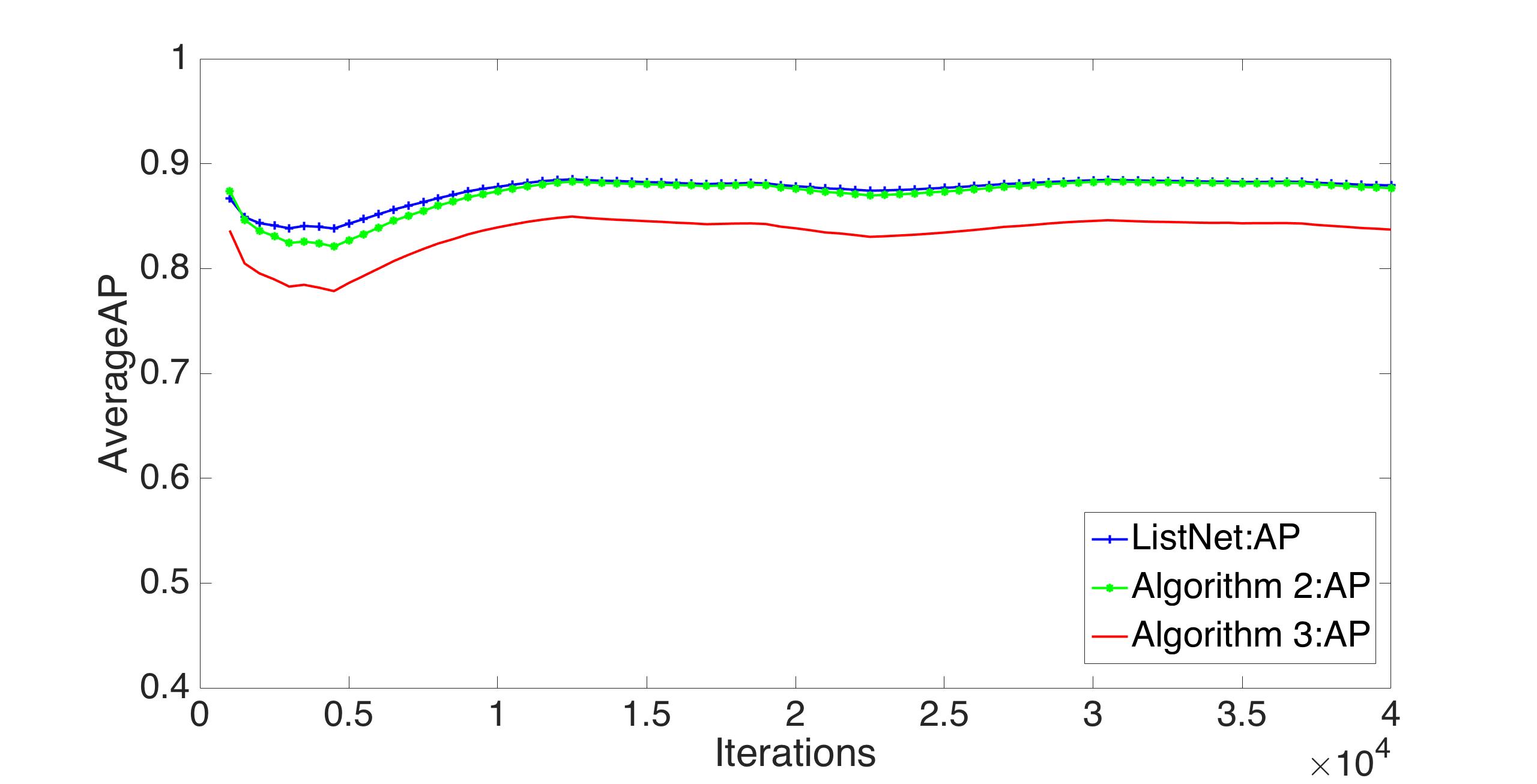}
        }\\
          \subfigure[Yandex]{%
            \label{Fig5}
            \includegraphics[height=5cm]{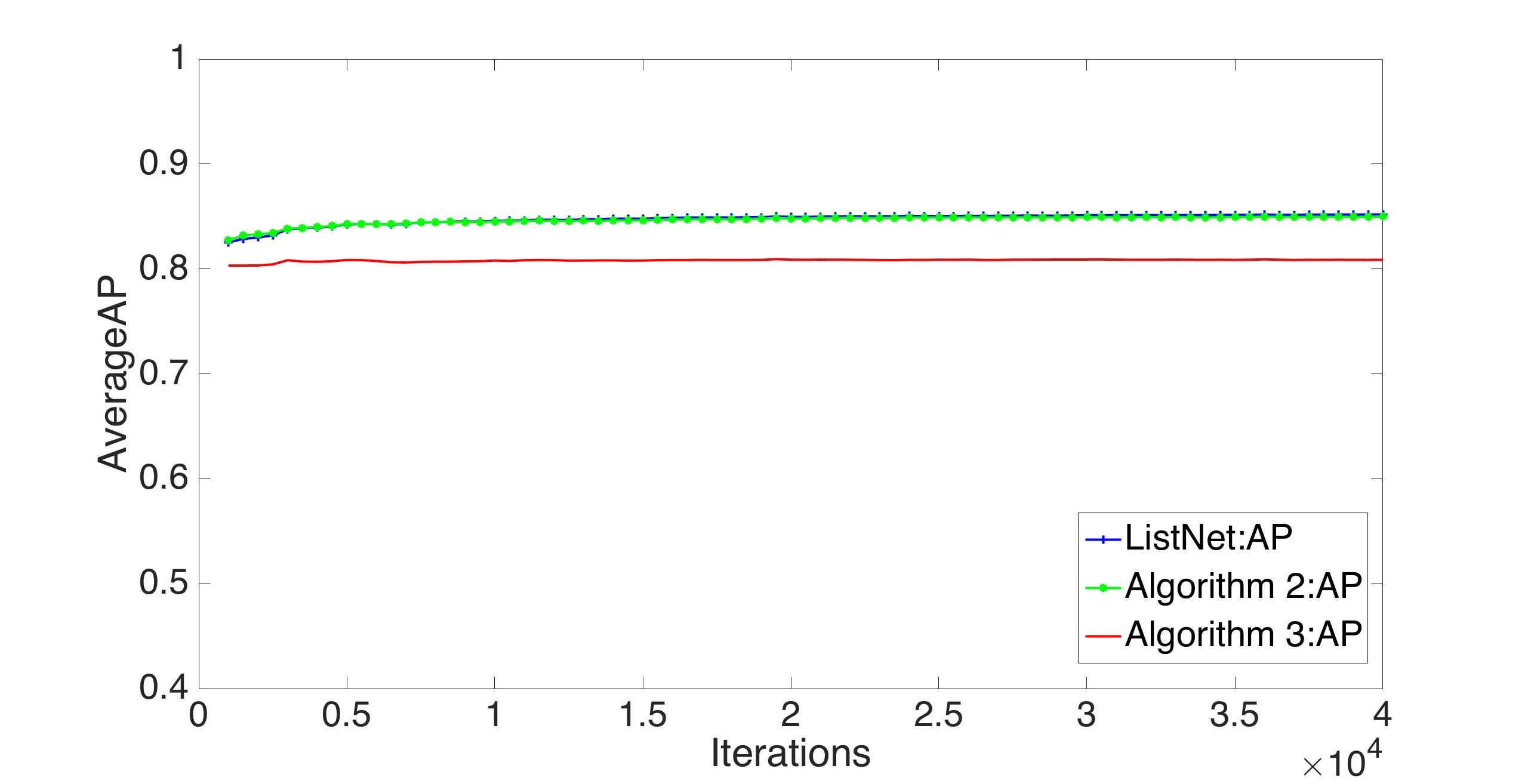}
        }%
      \end{center}
    \caption{Time averaged AP for Algorithm~\ref{alg:LA}, Algorithm~\ref{alg:OA} and ListNet for 3 commercial datasets.}%
   \label{fig:subfiguresAP}
\end{figure}

\begin{figure}[ht!]
     \begin{center}
     \subfigure[MSLR-WEB10K ]{%
            \label{Fig6}
           \includegraphics[height=5cm]{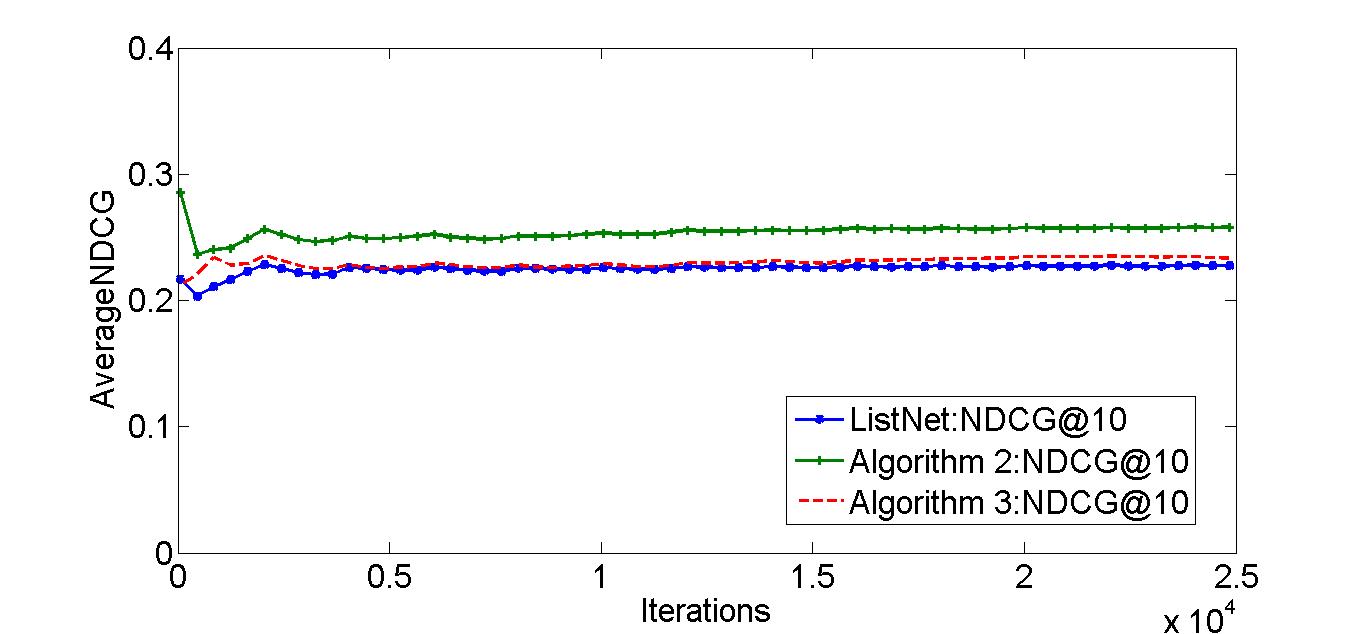}
        }\\
      \subfigure[Yahoo]{%
            \label{Fig7}
            \includegraphics[height=5cm]{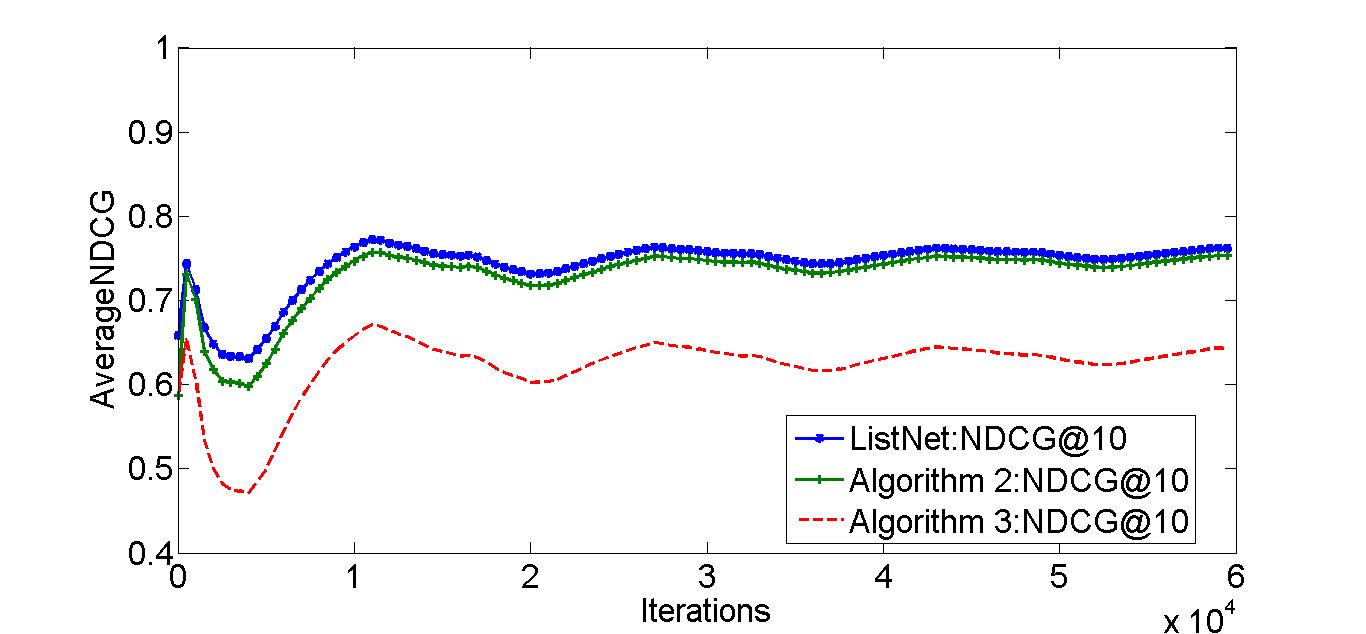}
        }\\
          \subfigure[Yandex]{%
            \label{Fig8}
            \includegraphics[height=5cm]{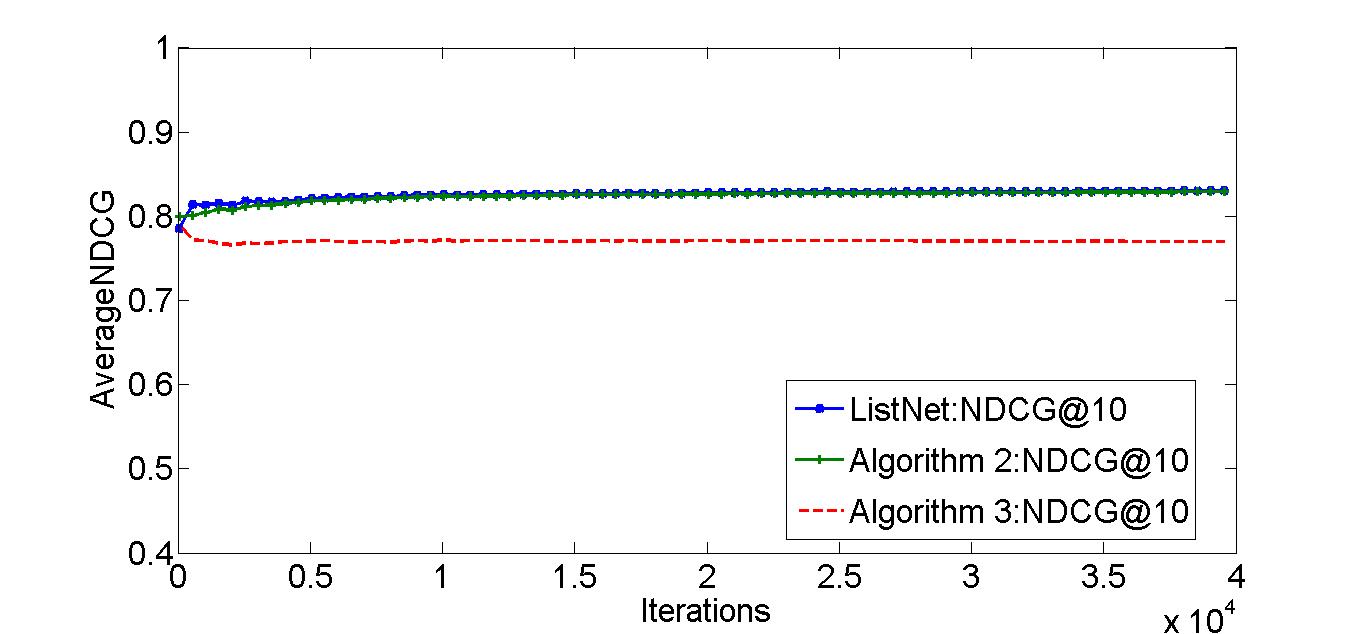}
        }%
      \end{center}
    \caption{Time averaged $\text{NDCG}_{10}$ for Algorithm~\ref{alg:LA}, Algorithm~\ref{alg:OA} and ListNet for 3 commercial datasets.}%
   \label{fig:subfiguresNDCG}
\end{figure}

%
%
%
%

{\bf Commercial Datasets}: We chose three large scale ranking datasets released by the industry to analyze the performance of our algorithms. MSLR-WEB10K \citep{liu2007} is the dataset published by Microsoft's Bing team, consisting of $10,000$ unique queries, with feature dimension of size $245$ and $5$ distinct relevance levels. Yahoo Learning to Rank Challenge dataset \citep{chapelle2011learning} consists of 19,944  unique queries, with feature dimension of size $700$ and 5 distinct relevance levels. Yandex, Russia's biggest search engine, published a dataset (link to the dataset given in the work of~\cite{chapelle2011learning}) consisting of $9124$ queries, with feature dimension of size $245$ and 5 distinct relevance levels. Since AP is suited to binary relevance vectors, we converted the multi-graded vectors to binary vectors when comparing algorithms based on AP. All documents with non-zero relevance grade were considered relevant for the purpose of conversion.

Algorithm~\ref{alg:LA} performs better than ListNet on MSLR-WEB dataset (average NDCG@10 over last ten iterations= 0.25 vs 0.22, average AP over last 10 iterations= 0.49 vs 0.37), performs slightly worse on Yahoo dataset (average NDCG@10 over last ten iterations= 0.75 vs 0.74, average AP over last ten iterations = 0.875 vs 0.87) and has overlapping performance on Yandex dataset. See Fig.~\ref{fig:subfiguresAP} and Fig.~\ref{fig:subfiguresNDCG} for AP and NDCG results respectively. The experiments validate that our proposed perceptron type algorithm (Algorithm~\ref{alg:LA}) has competitive performance compared to online ListNet on real ranking datasets, even though it does not achieve the theoretical lower bound. Algorithm~\ref{alg:OA} performs quite poorly on both Yandex and Yahoo datasets. One possible reason for the poor performance is that the underlying surrogate (Eq.~\ref{eq:newsurrogate}) is not listwise in nature. It does not put more emphasis on errors at the top and hence, is not very suitable for a listwise ranking measure like NDCG, even though it achieves the theoretical lower bound on separable datasets.



\section{Conclusion}
We proposed two perceptron-like algorithms for learning to rank, as analogues of the perceptron for classification algorithm. We showed how, under assumption of separability (i.e., existence of a perfect ranker), the cumulative NDCG/AP induced loss is bounded by a constant. The first algorithm operates on a listwise, large margin family of surrogates, which are adaptable to NDCG and AP. The second algorithm is based on another large margin surrogate, which does not have the listwise property. We also proved a lower bound on cumulative NDCG/AP loss under a separability condition and showed that it is the minimax bound, since our second algorithm achieves the bound. We conducted experiments on simulated and commercial datasets to corroborate our theoretical results. 

An important aspect of perceptron type algorithms is that the ranking function is updated only on a mistake round. Since non-linear ranking functions are generally have better performance than linear ranking functions, an online algorithm learning a flexible non-linear kernel ranking function would be very useful in practice.  We highlight how perceptron's ``update only on mistake round" aspect can prove to be powerful when learning a non-linear kernel ranking function. 
Since the score of each document is obtained via inner product of ranking parameter $w$ and feature representation of document $x$, this can be easily kernelized to learn a range of non-linear ranking functions.  However, the inherent difficulty of applying OGD to a convex ranking surrogate with kernel function is that at each update step, the document list ($X$ matrix) will need to be stored in memory. For moderately large dataset, this soon becomes a practical impossibility. One way of bypassing the problem is to approximately represent the kernel function via an explicit feature projection \citep{rahimi2007,le2013}. However, even for moderate length features (like 136 for MSWEB10K), the projection dimension becomes too high for efficient computation. Another technique is to have a finite budget for storing document matrices and discard carefully chosen members from the budget when budget capacity is exceeded. This budget extension has been studied for perceptron in classification \citep{dekel2008,cavallanti2007}. The fact that perceptron updates are only on mistake rounds leads to strong theoretical bounds on target loss. For OGD on general convex surrogates, the fact that function update happens on every round leads to inherent difficulties when using their kernelized versions \citep{zhao2012} (the theoretical guarantees on the target loss are not as strong as in the kernelized perceptron on a budget case).  The results presented in this paper open up a fruitful direction for further research: namely, to extend the perceptron algorithm to non-linear ranking functions by using kernels and establishing theoretical performance bounds in the presence of a memory budget. 

\acks{We gratefully acknowledge the support of NSF under grant IIS-1319810. We also thank Prateek Jain for pointing out the relevant question on perceptron bound for NDCG cut-off at $k \ll m$.}

\bibliography{Ranking}
\bibliographystyle{plainnat}

\newpage
\appendix
\section{\\Proof of Theorem.\ref{eq:upperbound1}}
 \label{App:AppendixA}

{\bf Proof for AP}:
As stated previously, documents pertaining to every query are sorted according to relevance labels. We point out another critical property of AP (for that matter any ranking measure). AP is only affected when scores of 2 documents, which have different relevance levels, are not consistent with the relevance levels, as long as ranking is obtained by sorting scores in descending order. That is, if $R_i=R_j$, then it does not matter whether $s_i> s_j$ or $s_i < s_j$. So, without loss of generality, we can always assume that within same relevance class, the documents are sorted according to scores. That is, if $R_i=R_j$ with $i<j$, then $s_i \ge s_j$. The without loss of generality holds because SLAM is calculated with knowledge of relevance and score vector. Thus, within same relevance class, we can sort the documents according to their scores (effectively exchanging document identities), without affecting SLAM loss.

Let $R \in \mathbb{R}^m$ be an arbitrary binary relevance vector, with $r$ relevant documents and $m-r$ irrelevant documents in a list. AP loss is only incurred if at least 1 irrelevant document is placed above at least 1 relevant document. With reference to $\phi^v_{SLAM}$ in Eq.~\eqref{eq:theoreticalloss}, for any $i \ge r+1$ and $\forall \ j>i$,  we have $\mathbbm{1}(R_i>R_j)=0$, since $R_i= R_j=0$. For any $i \ge r+1$ and $\forall \ j<i$, $\mathbbm{1}(R_i>R_j)=0$ since documents are sorted according to relevance labels and $R_i=0, R_j=1$. Thus, w.l.o.g., we can take $v_{r+1},..., v_m=0$, since indicator in SLAM loss will never turn on for $i \ge r+1$.

Let a score vector $s$ be such that an irrelevant document $j$  has the highest score among $m$ documents. Then, $\phi^v_{SLAM}= v_1(1 +s_j-s_1) +v_2(1+s_j-s_2) +...+v_r(1+s_j-s_r)$.  The maximum possible AP induced loss in case at least one irrelevant document has higher score than all relevant documents is when all irrelevant documents outscore all relevant documents. The AP loss in that case is: $ 1- \frac{1}{r}(\frac{1}{m-r+1} + \frac{2}{m-r+2}+..+ \frac{r}{m-r+r})$. Since $\phi^v_{SLAM}$ has to upper bound AP $\forall s$ (for each $R$) and since $s_j$ can be infinitesimally greater than all other score components (thus, $1+s_j-s_i \sim 1, \ \forall \ i=1,\ldots,r$), we need the following equation for upper bound property to hold:

$v_1 + v_2+...+ v_r \ge  1- \frac{1}{r}(\frac{1}{m-r+1} + \frac{2}{m-r+2}+..+ \frac{r}{m-r+r})$.

Similarly, let a score vector $s$ be such that an irrelevant document $j$ has higher score than all but the 1st relevant document. Then $\phi^v_{SLAM}= v_2(1 +s_j-s_2) +v_3(1+s_j-s_3) +...+v_r(1+s_j-s_r)$. The maximum possible AP induced loss in this case occurs when all irrelevant documents are placed above all relevant documents except the first relevant document. The AP loss in that case is: $ 1- \frac{1}{r}(1+ \frac{2}{m-r+2} + \frac{3}{m-r+3}+..+ \frac{r}{m-r+r})$. Following same line of logic for upper bounding as before, we get

$v_2 + v_3+...+ v_r \ge  1- \frac{1}{r}(1+ \frac{2}{m-r+2} + \frac{3}{m-r+3}+..+ \frac{r}{m-r+r})$.

Likewise, if we keep repeating the logic, we get sequence of inequalities, with the last inequality being

$v_r \ge 1 - \frac{1}{r}(r-1 + \frac{r}{m-r+r})$.

Now, it can be easily seen that our definition of $v^{\text{AP}}$ satisfies the inequalities.

{\bf Proof for NDCG}:

We once again remind that $\pi^{-1}(i)$ means position of document $i$ in permutation $\pi$. Thus, if document $i$ is placed at position $j$ in $\pi$, then $\pi^{-1}(i)=j$. Moreover, like AP, we assume that $R_1 \ge R_2 \ge \ldots \ge R_m$ and that within same relevance class, documents are sorted according to score. We have a modified definition of NDCG, for $k=m$, which is required for the proof:
\begin{equation}
\label{eq:NDCG}
\begin{split}
\text{NDCG}(s,R)= \frac{1}{Z(R)}\sum_{i=1}^m G(R_i)D(\pi^{-1}_s(i)) 
\end{split}
\end{equation}
where $G(r)= 2^r -1$, $D(i)= \frac{1}{\log_2{(i+1)}}$, $Z(R)= \underset {\pi}{\max}\sum_{i=1}^m G(R_i)D(\pi^{-1}(i))$.\\
We begin the proof:
\begin{equation*}
\begin{split}
&\quad 1- \text{NDCG}(s,R)  \\
&= \frac{1}{Z(R)} \sum_{i=1}^{m} G(R_i) D(i) - \frac{1}{Z(R)}\sum_{i=1}^m G(R_i)D(\pi^{-1}_s(i)) \\
& = \frac{1}{Z(R)} \sum_{i=1}^{m} G(R_i) \left( D(i) - D(\pi_s^{-1}(i)) \right) 
\end{split}
\end{equation*}

Now, $D(i)= \frac{1}{\log _2 (1+i)}$ is a decreasing function of $i$. $D(i) - D(\pi_s^{-1}(i))$ is positive only if $i < \pi_s^{-1}(i)$. This means that document $i$ in the original list,  is placed at position $\pi_s^{-1}(i)$, which comes after $i$, by sorted order of score vector $s$. By the assumption that indices of documents within same relevance class are sorted according to their scores, this means that document $i$ is outscored by another document (say with index $k$) with lower relevance level. At that point, the function $\max(0,\underset{j=1,\ldots,m}{ \max}\{\mathbbm{1}(R_i>R_j)(1+ s_j -s_i)\})$ turns on with value at least $1$ (i.e.,  $(1+s_k -s_i >1)$) and with weight vector $v^{\text{NDCG}}_i = \dfrac{G(R_i)D(i)}{Z(R)}$. We can now easily see the upper bound property.\\\\

\end{document}